\documentclass[runningheads]{llncs}
\usepackage[T1]{fontenc}
\usepackage{graphicx}
\usepackage{booktabs}
\usepackage[misc]{ifsym}
\newcommand{\corr}{(\Letter)}
\usepackage{mwe}

\usepackage{mathtools}
\usepackage{caption}
\usepackage{subcaption}
\usepackage{float}
\usepackage{makecell} 
\usepackage{amstext}
\usepackage{amssymb}
\usepackage{graphicx}
\usepackage{algorithm}          
\usepackage{algorithmicx}
\usepackage[noend]{algpseudocode}

\usepackage{xcolor}
\usepackage{multirow}
\usepackage{hyperref}

\newcommand{\BibTeX}{\rm B\kern-.05em{\sc i\kern-.025em b}\kern-.08em\TeX}

\begin{document}

\title{Neurosymbolic Reasoning with \\ Incremental Knowledge for Sample Efficient \\
Hierarchical Reinforcement Learning}
\toctitle{Neurosymbolic Reasoning with Incremental Knowledge for Sample Efficient Hierarchical Reinforcement Learning} 


\titlerunning{Neurosymbolic Reasoning with InK for Sample Efficient HRL}

\author{Subrat Prasad Panda\inst{1,2} \corr \and
Blaise Genest\inst{2,3} \and
Arvind Easwaran\inst{1}}
\tocauthor{Subrat Prasad Panda, Blaise Genest, Arvind Easwaran} 
\authorrunning{S.P. Panda et al.}

\institute{CCDS, NTU Singapore \email{\{subratpr001@e.,arvinde@\}ntu.edu.sg}
\and
CNRS@CREATE, Singapore \email{blaise.genest@cnrsatcreate.sg}
\and
IPAL, CNRS, France}

\maketitle              

\begin{abstract}
(Flat) Reinforcement Learning (RL) agents face significant challenges in environments with sparse rewards that require long-horizon reasoning. A compelling approach to improve sample efficiency is to incorporate knowledge into learning and decision-making. In standard Hierarchical RL (HRL), knowledge is encoded in a fixed, non-updatable form, such as architectural choices, and remains unchanged throughout learning. With fixed HRL, reasoning with incremental knowledge learned during exploration is impractical before sufficient environmental knowledge is acquired, leading to poor sample efficiency.
In this work, we propose neurosymbolic HRL with {\em Incremental Knowledge (InK)}: symbolic high-level components perform {\em symbolic planning} (e.g. using $D^*$) on an updatable representation of current InK, while low-level goal-conditioned neural modules learn motion primitives through experience using reward shaping. Experiments on navigation tasks demonstrate that incorporating InK substantially improves sample efficiency.
Additionally, to perform {\em optimal} symbolic planning given {\em prior} knowledge about the world, we develop Belief World Tree Search.
The code is available at \url{https://github.com/CPS-research-group/ink_bwts}.

\keywords{Neurosymbolic RL \and Incremental Knowledge.}
\end{abstract}

\section{Introduction}\label{intro}
Deep RL (DRL) has shown remarkable success in high-dimensional control problems, yet in domains requiring long-horizon planning and sparse rewards, purely end-to-end approaches suffer from prohibitive sample complexity \cite{zhao2024learning,goal-misgeneralization22}. Hierarchical RL (HRL) mitigates this by decomposing tasks across abstraction levels \cite{hrl_survey21}, with most approaches using neural network (NN) policies at both levels. More recent work explores neurosymbolic approaches \cite{acharya2023neurosymbolic}, where the high-level component is symbolic and the low-level component remains neural. In these approaches, the symbolic high-level maintains an abstract world model over which planning is performed, while the neural low-level executes the resulting subgoals through goal-conditioned policies. For instance, methods such as SoRB \cite{eysenbach2019sorb} and RGL \cite{rgl2024} first learn a full abstract graph over the state space through extensive exploration, after which planning is performed over the learned graph for deployment. This non-incremental knowledge (non-InK) paradigm assumes the world model must be fully constructed before any goal-directed behavior begins, requiring the agent to exhaustively explore the environment upfront, regardless of the actual goal, making these methods fundamentally inefficient.

In contrast, classical robotics planning often adopts incremental knowledge (InK), where the abstract world model is updated during execution, for example from execution traces \cite{ng2019incremental}.
A simple example is Active SLAM \cite{cadena2016slam}, where newly discovered obstacles trigger replanning as the map is incrementally updated. Algorithms such as $D^*$ (Dynamic $A^*$) replan paths using the current InK, incorporating newly detected obstacles \cite{stentz1995focussed,stentz1994optimal,koenig2003performance}.
The closest line of work to HRL arises in Task and Motion Planning (TAMP), where motion constraints discovered by the motion planner (low level) are incrementally incorporated into task planning (high level) \cite{tamper24,noseworthy2021active}. Similarly, \cite{lyu2019sdrl} learns a meta-policy to constrain plan feasibility when composing low-level policies. However, these approaches typically assume symbolic components at both high and low levels.

In neurosymbolic HRL, by contrast, the low-level controller is typically a neural policy with unknown dynamics, often pretrained independently of the target environment, e.g., a policy trained in an obstacle-free maze deployed in a maze with walls. This creates a semantic gap between the high-level abstract model and the low-level policy: what the high-level planner assumes feasible may be impossible to execute. So, high-level plans can become infeasible at execution due to incomplete knowledge, stochastic effects, or misdetected obstacles.
Such failures highlight the need for InK to update the abstract world model based on execution feedback. Incorporating InK better reflects human reasoning \cite{sharma2022map}, where agents act with partial knowledge, update it as new information is acquired, and refine their decisions accordingly, rather than exhaustively exploring the environment upfront.
However, existing non-incremental neurosymbolic HRL approaches are structurally unable to capture this policy-dependent world model update, and classical InK approaches assume symbolic components at both levels, leaving the use of InK in neurosymbolic HRL underexplored.

Furthermore, using InK in HRL enables incorporating structural knowledge into the high-level abstract world model. Since knowledge is partial at any time, planning must consider multiple possible belief worlds (a set of possible abstract world models consistent with the knowledge acquired thus far) to account for uncertainty \cite{sharma2022map}. While planning under uncertainty is often cast as a Partially Observable MDP (POMDP), where the transition model is known but states are hidden, InK differs fundamentally: states are fully observable, but the world dynamics are unknown and discovered incrementally. BAMDPs address unknown dynamics by reducing to a POMDP over augmented belief states; BAMCP~\cite{bamdp2013}, which adapts POMCP~\cite{pomcp_nips10} to this reduction, models uncertainty using Bayesian priors over transition probabilities but assumes independence across states.
However, some structural knowledge cannot be captured under independence assumptions: for instance, knowing that ``there is exactly one wall at an unknown location'' means that discovering a wall at one location immediately constrains all other locations.
As a result, these approaches cannot effectively exploit such prior structural constraints, and simple incremental replanning algorithms like $D^*$ do not readily incorporate this knowledge. Although these algorithms provide strong worst-case performance \cite{koenig2003performance}, they are not optimal when averaged over belief worlds.

In this work, we propose neurosymbolic HRL with InK to address these challenges. At the high level, a symbolic planner maintains an updatable abstract world model, replanning with $D^*$ (it could be any other incremental planner) as new information arrives from low-level execution. At the low level, a goal-conditioned neural policy handles continuous motion control and acts as a sensor: its successes and failures directly inform abstract world model updates. This tight coordination eliminates the costly upfront exploration phase of non-InK methods, enabling goal-directed behavior from the very first episode. To further exploit structural prior knowledge over possible worlds, we introduce Belief World Tree Search (BWTS), which adapts MCTS \cite{browne2012survey} to plan optimally on average over all worlds consistent with the structural priors, addressing the fundamental limitations of both incremental planners like $D^*$ and prior-based planners such as BAMCP in the InK setting.

\smallskip
\noindent \textbf{Our Main Contributions:}
\begin{itemize}
\item We propose a neurosymbolic HRL for continuous control problems, incorporating structural InK through an updatable symbolic representation.
\item We develop the BWTS algorithm converging towards an {\em optimal} policy for unknown environments among a set of possible worlds capturing {\em prior} structural knowledge.
\item We empirically demonstrate, using navigation task as a case study, that InK HRL outperforms non-InK HRL in sample efficiency.
\end{itemize}

\section{Related Work}\label{lit_review}

\noindent
\textbf{Hierarchical RL (HRL).}
End-to-end flat DRL has been developed for long-horizon reasoning 
using memory, auxiliary losses, or reward shaping \cite{mirowski2017learning,zhu2017target,zhao2024learning}. 
These techniques can become inefficient for complex tasks \cite{nachum2020why}.
HRL mitigates this by structuring policies across abstraction levels, improving exploration and scalability in sparse-reward settings \cite{hutsebaut2022hierarchical}. 
Foundational options and feudal frameworks introduce temporally extended actions \cite{sutton1999between}. Goal-conditioned HRL refines this by letting high-level policies set subgoals 
\cite{nasiriany2019planning,nachum2018data,levy2017learning,lei2025goalconditioned}. 
However, most HRL approaches lack explicit symbolic knowledge representations.

\smallskip
\noindent
\textbf{Graph Neurosymbolic HRL.} In graph-based methods such as SoRB \cite{eysenbach2019sorb}, the high-level is a graph on which planning is done and low-level is a NN. Similar hierarchical formulations can be found in landmark-guided HRL \cite{huang2019mapping,kim2021landmarkguided}, world graphs \cite{shang2020learning,savinov2018semiparametric},
abstract/refinement of world model with AI2 tool \cite{zadem2024reconciling},
state-space partitioning \cite{shah2024hierarchical},
probabilistic road maps \cite{prm_rl,cprm_rl}, PAHRL \cite{gieselmann2021planning}, RL-RRT \cite{rrt-rl}, and RGL \cite{rgl2024}.
It typically follows a non-InK paradigm in which a complete model must be acquired before reasoning. 

\smallskip
\noindent
\textbf{Incremental Knowledge.}
The overarching idea of InK appears in classical symbolic planning, where it is used to refine domain definitions or augment them with constraints derived from execution traces \cite{ng2019incremental,tamper24,noseworthy2021active}. In this work, we apply a similar idea to HRL to improve sample efficiency. In the RL setting, incremental learning \cite{khetarpal2022towards,meng2025preserving} has been used to learn parametric transition functions; however, incorporating explicit structural knowledge requires an updatable symbolic representation, which is the focus of this work. BAMCP \cite{bamdp2013} proposes to incorporate such knowledge incrementally in symbolic domain, but assumes independence between probability distributions of different state-action pairs: an assumption that does not hold in our setting, as we demonstrate experimentally.

\smallskip
\noindent
\textbf{Other Neurosymbolic Approaches.}
Neural and symbolic components can be combined in multiple ways. One common approach uses symbolic planning at a high level to guide exploration and direct low-level policy learning \cite{lyu2019sdrl,kokel2021deep,mayr2022combining} \cite{garrett2020pddlstream,guan2022leveraging,prakash2022towards}. Closely related, \cite{mao2023pdsketch} learns an abstract world model from demonstrations, but assumes a symbolic low-level policy (geometric motion planning). In contrast, our approach incrementally learns the abstract world model from real execution traces given a neural low-level policy, making the abstract world model policy-dependent and reducing the semantic gap between high-level abstractions and actual low-level behavior.

\smallskip
\smallskip
\noindent
\textbf{Monte Carlo Tree Search.}
Monte Carlo Tree Search (MCTS) is a dominant paradigm for {\em planning} in graphs, combining stochastic rollouts with tree expansion to approximate values without requiring a full transition model of the environment \cite{coulom2006efficient,kocsis2006bandit}, with success in complex domains, e.g. (2 player) games \cite{browne2012survey}. POMCP~\cite{pomcp_nips10} solves POMDPs where the transition model is known but states are partially observable, using particle filtering to maintain beliefs over hidden states. BAMCP~\cite{bamdp2013} extends this to unknown transitions by casting the problem as a BAMDP, which reduces to a POMDP over an augmented state space encoding beliefs over transition parameters, and then applies POMCP-style tree search; however, this reduction requires independence across state-action pairs, preventing it from capturing structural constraints (e.g., ``exactly one wall at an unknown location''). In contrast, BWTS addresses a fully observable but unknown setting: it maintains beliefs over structured sets of possible worlds without independence assumptions and uses strategic rather than probabilistic rollouts.

\section{Setting and Methodology}\label{methods}

\subsection{The Incremental Knowledge Setting}

We introduce first the (structural) {\em Incremental Knowledge (InK)} setting: the agent evolves in a world $W_{\text{true}}$ that it does not know. To model everything that it believes could happen, we consider a set $\mathcal{W}_0$ of possible worlds,
with $W_{\text{true}} \in \mathcal{W}_0$: the agent knows that the possible outcomes of an action $a$ after observations $\rho$ are exactly those compatible with some world $W \in \mathcal{W}_0$. This set $\mathcal{W}_0$ also serves to define the average reward over all the possible choices of $W \in \mathcal{W}_0$. Last, the choice of $\mathcal{W}_0$ encodes any (structural) {\em prior knowledge} on the world.

A Belief World Process is a tuple $\mathcal{M} = (\mathcal{W}_0, \mathcal{S},$  $\mathcal{A}, (\mathcal{T}_W)_{W \in \mathcal{W}_0}, C)$, where $\mathcal{W}_0$ is a finite set of all possible worlds, 
$\mathcal{S} \subseteq \mathbb{R}^n$ is the common {\em state space} of all the worlds, 
and $\mathcal{A}$ the common {\em action space} of all the worlds. 
The {\em transition dynamics} depends upon the worlds: 
given $W \in \mathcal{W}_0$, we have $\mathcal{T}_W: \mathcal{S} \times \mathcal{A} \times \mathcal{S} \rightarrow [0,1]$, 
denoting the probability to reach state $S'$ after trying action $a$ from state $S$ in world $W$.
It is deterministic if for all $S,a$, we have $\mathcal{T}_W(S,a,S')=1$ for a unique state $S'$, and 0 for other $S''$. 
The  {\em cost function} is $C : \mathcal{S} \times \mathcal{A} \rightarrow \mathbb{R}^{\geq 0}$, where the cost 
$C(S,a)=0$ only if $S$ is in the goal region $G \subset \mathcal{S}$, where $G$ is an attractor w.r.t. $\mathcal{T}_W$ for all $W$ (from $G$, all forthcoming costs are $0$). 
Some information on the real world $W_{\text{true}}$ can be extracted from the observed history $\rho=S_0, \ldots, S_{n-1}$: if this sequence has probability $0$ in a world $W$, then $W_{\text{true}} \neq W$.

A policy $\pi$ chooses action $a_n$ from history 
$\rho=S_0,a_1, \ldots, S_{n-1}$, that is $\pi(\rho)=a_n$. 
We denote by $C(\pi, W) \in \mathbb{R}^{\geq 0} \cup \{+\infty\}$ 
the cumulative cost to reach the goal in world $W \in \mathcal{W}_0$ when policy $\pi$ is used.
We then denote $C(\pi)$ the average of $C(\pi,W)$ over all the worlds 
$W \in \mathcal{W}_0$. We denote by $C^*$ the optimal cost in ${\mathcal W}_0$:
\begin{equation}
\label{eq1}
C^* = \inf_{\pi} C(\pi) = \inf_{\pi}\; \frac{\sum_{W\in \mathcal{W}_0}\!\left[ C(\pi,W) \right]}{|\mathcal{W}_0|}
\end{equation}
The {\em InK problem} consists in computing a strategy $\pi$ so as to minimize $C(\pi)$.

\subsection{A Neurosymbolic Hierarchical RL for InK}

Solving the RL problem end-to-end is particularly challenging in sparse-reward settings. We therefore adopt a hierarchical decomposition 
with symbolic planning directing continuous control.
At a high level, the state space $\mathcal{S}$ and action space $\mathcal{A}$ are abstracted into a finite set of symbolic (discrete) states $V$ (vertices in the discrete state graph) and a finite set of symbolic actions $A$, respectively.
Planning is performed in this abstracted space, where the high-level model maintains a world model $M$, which is an updatable representation that uses InK.
We execute the loop, as also illustrated in Fig.~\ref{fig:nesy_hrl}:
\begin{enumerate}
\item Symbolic planning is performed over $M$ using $D^*$ from current configuration $s$ to reach $G$ in an optimal way. Let $g$ be the first intermediate goal from the symbolic plan.
\item Direct the low-level to achieve the goal $g$.
\item Upon receiving new information from the low-level monitor, update (the InK in) $M$.
\end{enumerate}

\begin{figure}[t!]
  \centering
  \includegraphics[width=0.55\linewidth]{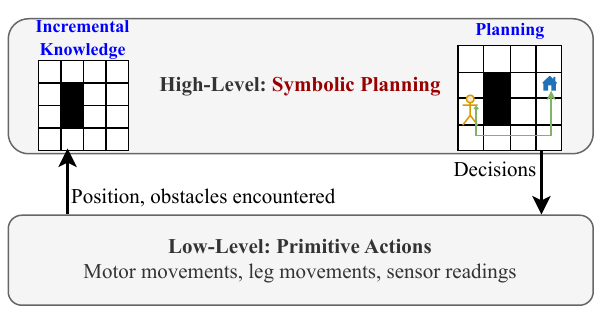}
  \caption{A Neurosymbolic HRL for InK}
  \label{fig:nesy_hrl}
\end{figure}

At the low level, a goal-conditioned policy $\pi_l$
considers the current state $s \in \mathcal{S}$ and the goal $g$ from the high level, and inductively executes a sequence $\rho$ with low-level actions $\pi_l(s,g)$ in the environment until $g$ is reached or fails. A monitor $mon$ monitors $\rho$ to detect environmental information. Upon returning, the low-level reports $s$ and $mon(\rho)$ to the high-level.

\paragraph{Coordination between levels.}
The high-level and low-level components interact through a tight coordination mechanism. The high-level maintains the symbolic representation $M$, for instance, a set of predicates, as well as the set of possible high-level abstract actions $A$. 
Each action is associated with a set of possible outcomes, and for each outcome we have the cost. At first, all outcomes are believed possible, but from real execution traces, it can learn some abstract actions are not possible.
Symbolic planning is performed to reach $G$, either by taking an optimistic semantic (the best outcome will happen), or using stochastic models taking into account the probabilities. In this work, we use $D^*$ as the incremental planner for its simplicity in analysis.

\subsection{$A^*$ and Dynamic $A^*$ ($D^*$)}
$A^*$ performs heuristic-guided search over the symbolic graph defined by the generative actions, expanding only the most promising nodes without constructing the full graph.
{\em Dynamic $A^*$} ({\em $D^*$}) \cite{stentz1995focussed,stentz1994optimal} extends $A^*$ to the InK setting: it plans using the current knowledge, and upon action failure, updates the graph and replans. Originally developed for discrete navigation, $D^*$ is known to be close to optimal {\em in the worst case} (by a factor at most $O(\sqrt{|V|})$, where $|V|$ is the number of graph vertices or abstract states) \cite{koenig2003performance}. However, no theoretical guarantees exist {\em on average over all possible worlds}. Critically, $D^*$ does not consider the belief set $\mathcal{W}_0$, making it unable to exploit prior knowledge. We address this limitation in Section~\ref{sec:bwts} with an algorithm that is {\em optimal on average} over $\mathcal{W}_0$.

\section{Belief World Tree search (BWTS)}
\label{sec:bwts}

As the incremental planner $D^*$ does not consider the belief set $\mathcal{W}_0$, it cannot exploit structural prior knowledge, making it suboptimal on average over possible worlds. We propose BWTS, an algorithm that is \textit{optimal on average} over $\mathcal{W}_0$, particularly effective when structural prior knowledge is available.
We use maze navigation as a running example to illustrate BWTS throughout this section.
BWTS is developed to provide {\em optimal} solutions for {\em discrete (symbolic) graphs} $(V, A, \tau_u, C)$, where $V$ is the set of vertices (representing discrete abstract states), $A$ is the finite set of actions, $\tau_u: V \times A \to V$ is the unobstructed transition function, and $C(v,a)$ is the cost of taking action $a$ from vertex $v$, considering the set ${\mathcal W}_0$ of possible worlds. In each world $W \in \mathcal{W}_0$, action $a$ from $v$ either reaches $v' = \tau_u(v,a)$ if unobstructed, or loops back to $v$ if a wall blocks $v'$.
BWTS computes a policy $\pi$, mapping history $\rho$ to action $\pi(\rho)=a$, minimizing the cumulative cost averaged over worlds $W\in\mathcal{W}_0$ (Eq.~(\ref{eq1})).
We write $v^a = \tau_u(v,a)$; e.g., $(2,3)^{\text{right}} = (3,3)$.

BWTS will maintain a belief set $\mathcal{W}_\rho \subseteq \mathcal{W}_0$
of worlds in $\mathcal{W}_0$ compatible with history $\rho$ ending in $v$. The belief set 
after history $\rho' = \rho,a,v'$ is 
$\mathcal{W}_{\rho'} = \mathcal{W}_{\rho}^- =
\{W \in \mathcal{W}_\rho \mid \text{ there is a wall in position }v^a\}$ if $v'=v$;
otherwise 
$\mathcal{W}_{\rho'}=\mathcal{W}_{\rho}^+ =
\{W \in \mathcal{W}_\rho \mid$ there is no wall in position $v^a=v'\}$. 
That is, $\mathcal{W}_\rho=  \mathcal{W}_{\rho}^+ \sqcup \mathcal{W}_{\rho}^-$ is partitioned into two sets. 

\begin{figure*}[t] 
  \centering
  \begin{subfigure}{.24\textwidth}
    \centering
    \includegraphics[width=\linewidth]{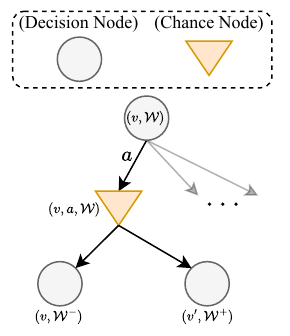}
    \caption{BWTS Nodes}
    \label{fig:BWTS_nodes}
  \end{subfigure} 
  \begin{subfigure}{.75\textwidth}
    \centering
    \includegraphics[width=\linewidth]{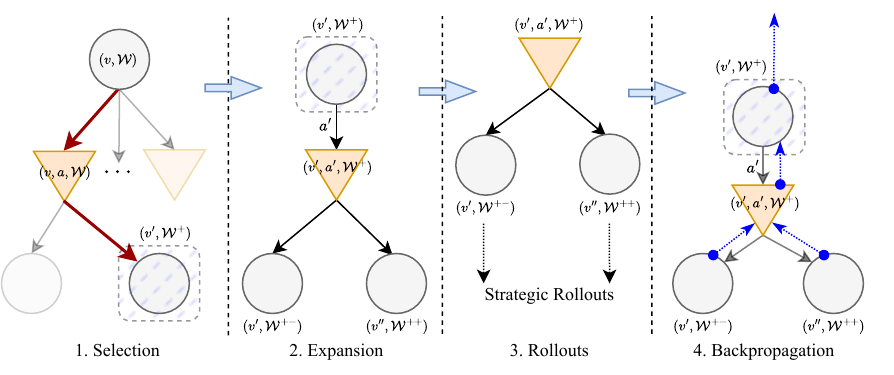}
    \caption{BWTS Learning stages}
    \label{fig:BWTS_learning}
  \end{subfigure}
  \caption{BWTS node types in (a) and learning stages in (b): 1. Selection (tree policy) $\to$ 2. Expansion (unvisited nodes) $\to$ 3.~Rollout (value estimate) $\to$ 4.~Backpropagation. Repeat these four stages over iterations to grow the tree.}
  \label{fig:BWTS_explanation}
\end{figure*}

\subsection{The complete BWTS tree}

We now describe BWTS trees.
The root is $(v_0,\mathcal{W}_0)$, with $v_0$ the initial state and 
$\mathcal{W}_0$ the set of all possible worlds.
Nodes are either decision nodes or chance nodes as illustrated in Fig. \ref{fig:BWTS_nodes}.
\emph{Decision nodes} represent choices of actions and are denoted by
$(v,\mathcal{W})$. A decision node $(v,\mathcal{W})$ 
has $|A|$ (chance nodes) children $(v,a,\mathcal{W})$, one per action $a \in A$. 
A chance node $(v,a,\mathcal{W})$ has two (decision nodes) children
$(v,\mathcal{W}^-)$ and $(v',\mathcal{W}^+)$
representing whether the unblocked successor $v^a$ from $v$ playing $a$ is a wall or not.
Decision nodes $(v,\mathcal{W})$ where 
$|\mathcal{W}|=1$, i.e. $\mathcal{W} = \{W\}$ for some $W$, 
have no successor. The world is fully known to be $W$, and a shortest-path algorithm (e.g., $A^*$ or Dijkstra) can be run on $W$. 
Notice that the path from the root to a node $(v_n,\mathcal{W}_n)$ of the tree describes the history $\rho=v_0, a_1, \ldots, v_n$, and we have $\mathcal{W}_n = \mathcal{W}_\rho$. 
If the BWTS is {\em complete}, that is unfolded until all the leaves $(v,\mathcal{W})$ are associated with belief $\mathcal{W}$ with a unique world, then we can associate each node with its cost, from the leaves to the root:
The cost $C(v,\{W\})$ of a leaf $(v,\{W\})$ is the cost of the shortest path in $W$ to the goal. 

\noindent
Then we inductively compute bottom up: for a chance node $(v,a,\mathcal{W})$ with two children  
$(v_1,\mathcal{W}_1)$, $(v_2,\mathcal{W}_2)$, its cost is:

\begin{equation}
\label{eq:chance}
  C(v,a,\mathcal{W})= \frac{|\mathcal{W}_1| C(v_1,\mathcal{W}_1) + |\mathcal{W}_2|C(v_2,\mathcal{W}_2)}{|\mathcal{W}_1|+|\mathcal{W}_2| = |\mathcal{W}|}    
\end{equation}

\noindent
For a decision node $(v,\mathcal{W})$, we define:
\begin{equation}
\label{eq:decision}
C(v,\mathcal{W}) = \min_{a \in A} C(v,a) + C(v,a,\mathcal{W})
\end{equation}

\begin{theorem}
\label{prop.complete}
The policy $\pi^*$ choosing in $(v,\mathcal{W})$ action $a$ minimizing $C(v,a,\mathcal{W})$ has average cost $C(\pi^*)=C(v_0,\mathcal{W}_0)$. Further, $C(\pi^*)=C^*$ the optimal cost of the InK problem.
\end{theorem}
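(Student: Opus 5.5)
The plan is a structural induction on the complete BWTS tree, from the leaves up to the root. The quantity to track is, for a node $(v,\mathcal{W})$ reached by history $\rho$ (so $\mathcal{W}=\mathcal{W}_\rho$) and a policy $\pi$, the \emph{conditional average cost-to-go}
\[
C_\rho(\pi) = \frac{1}{|\mathcal{W}|}\sum_{W\in\mathcal{W}} C_\rho(\pi,W),
\]
where $C_\rho(\pi,W)$ is the cumulative cost in world $W$ accumulated after $\rho$ when $\pi$ continues from $\rho$.

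I will prove two claims simultaneously by induction on the height of the node:
\begin{itemize}
\item[(a)] $C_\rho(\pi^*) = C(v,\mathcal{W})$;
\item[(b)] $C_\rho(\pi) \ge C(v,\mathcal{W})$ for every policy $\pi$.
\end{itemize}
Taking $\rho=v_0$, $\mathcal{W}=\mathcal{W}_0$ then gives $C(\pi^*) = C(v_0,\mathcal{W}_0) \le C(\pi)$ for all $\pi$, hence $C(\pi^*) = C^*$ via Eq.~(\ref{eq1}).

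I read the statement's $\pi^*$ as minimizing $C(v,a)+C(v,a,\mathcal{W})$, as in Eq.~(\ref{eq:decision}); I will note this, since minimizing $C(v,a,\mathcal{W})$ alone coincides with it only under uniform action costs. At leaves $\{W\}$, $\pi^*$ plays shortest paths, which is optimal in a known deterministic world.

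\textbf{Base case.} Here $|\mathcal{W}|=1$. The world is determined, so $C_\rho(\pi,W)$ is the cost of some path to the goal in $W$. This is at least the shortest-path cost $C(v,\{W\})$, with equality for $\pi^*$.

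\textbf{Inductive step at a decision node.} Fix the action $a=\pi(\rho)$. Every world $W\in\mathcal{W}$ pays $C(v,a)$ and then moves deterministically. Worlds in $\mathcal{W}^-$ go to history $\rho,a,v$, and worlds in $\mathcal{W}^+$ go to history $\rho,a,v^a$. This split is well defined because $\mathcal{W}=\mathcal{W}^+\sqcup\mathcal{W}^-$ is a partition, and the successor histories carry exactly the beliefs $\mathcal{W}^-$ and $\mathcal{W}^+$. Splitting the sum over $\mathcal{W}$ along this partition gives
\[
C_\rho(\pi) = C(v,a) + \frac{|\mathcal{W}^-|\,C_{\rho,a,v}(\pi) + |\mathcal{W}^+|\,C_{\rho,a,v^a}(\pi)}{|\mathcal{W}|}.
\]
Applying the induction hypothesis to the two children and using Eq.~(\ref{eq:chance}) yields
\[
C_\rho(\pi) \ge C(v,a) + C(v,a,\mathcal{W}) \ge C(v,\mathcal{W})
\]
by Eq.~(\ref{eq:decision}). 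For $\pi^*$, all these inequalities are equalities by the choice of its action. If one of $\mathcal{W}^\pm$ is empty, that child simply carries weight zero.

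\textbf{Main obstacle: well-foundedness.} The induction needs the complete tree to be finite, i.e. every branch must reach a singleton belief. This can fail. A chance node whose action is uninformative (all worlds agree on $v^a$) has a child with $\mathcal{W}'=\mathcal{W}$, so the tree can loop indefinitely without ever shrinking the belief. The statement says ``if the BWTS is complete'', which presupposes finiteness, so I will treat finite unfolding as a hypothesis.

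To make this rigorous, I plan to define $C(\cdot)$ on possibly infinite trees as the least fixed point of Eqs.~(\ref{eq:chance})--(\ref{eq:decision}). Equivalently, it is the limit of the depth-$k$ truncated values obtained by starting all leaves at $0$ and unfolding to depth $k$. The argument then goes as follows.
\begin{itemize}
\item Since costs are nonnegative, costs are positive off the goal, and $\mathcal{W}_0$ is finite, the truncated values are monotone in $k$, and the inductive argument shows they lower-bound any policy's cost. This gives (b).
\item For (a), optimal stationary choices exist among finitely many actions at each belief, so the minimum is attained and is finite whenever the goal is reachable. The reachability of the goal comes from $G$ being an attractor.
\end{itemize}
This step, not the averaging algebra, is where I expect the care to be needed.
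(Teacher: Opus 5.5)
Your proof is correct. Its core is the same backward induction as the paper's: split the average over $\mathcal{W}=\mathcal{W}^+\sqcup\mathcal{W}^-$ and apply Eqs.~(\ref{eq:chance})--(\ref{eq:decision}). The real difference is the induction measure.

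\textbf{The induction measure.} The paper inducts on $|\mathcal{W}|$ and asserts that both $\mathcal{W}_0^-$ and $\mathcal{W}_0^+$ are strictly smaller than $\mathcal{W}_0$. That step fails for exactly the uninformative actions you flag. For example, any move inside the wall-free start row of $\mathcal{W}_h$ gives $\mathcal{W}^+=\mathcal{W}$ and $\mathcal{W}^-=\emptyset$. Your induction on node height (under an explicit finiteness hypothesis), or on the truncation depth $k$, is therefore the well-founded version of the argument.

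\textbf{Other improvements over the paper's proof.} Your single chain $C_\rho(\pi)\ge C(v,a)+C(v,a,\mathcal{W})\ge C(v,\mathcal{W})$ makes the paper's case split unnecessary. That split distinguishes $C(v_0,a',\mathcal{W}_0)$ equal to versus larger than $C(v_0,a,\mathcal{W}_0)$, with a ``wlog $a=a'$'' re-choice of $\pi^*$. Your reading of $\pi^*$ as minimizing $C(v,a)+C(v,a,\mathcal{W})$ is genuinely needed, because the paper's comparison also ignores the action cost.

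\textbf{A finite-tree alternative.} If you prefer a finite tree, the paper's appendix (cycle avoidance) suggests another repair. Give cost $+\infty$ to any branch that repeats a configuration $(v,\mathcal{W})$ on its ancestor path, and make nodes with $v\in G$ leaves of cost $0$. A repetition means every world in $\mathcal{W}$ went through the same positive-cost loop, so shortcutting shows some optimal policy avoids repetitions. The cut tree is then finite and has value $C^*$.

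\textbf{Tightening step (a) of your fixed-point route.} Attainment of the minimum does not by itself show that the greedy $\pi^*$ achieves the least fixed point $\bar C$. Argue instead as follows.
\begin{itemize}
\item $\bar C$ is a fixed point, since a minimum over finitely many actions commutes with increasing limits.
\item The $k$-step truncated cost of $\pi^*$ is $\pi^*$'s own one-step recursion applied $k$ times to $0$.
\item By monotonicity, this is at most the same recursion applied $k$ times to $\bar C$, which leaves $\bar C$ unchanged because $\pi^*$ is greedy.
\item Letting $k\to\infty$, with nonnegative costs, gives $C(\pi^*)\le\bar C$.
\end{itemize}
Also, $G$ being an attractor does not make $G$ reachable. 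You do not need reachability, though: all quantities can be taken in $[0,+\infty]$.
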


\noindent
{\bf Evaluation of the size of the full BWTS tree:} 
Assuming $\mathcal{W}=  \mathcal{W}^+ \sqcup \mathcal{W}^-$ always partitions the belief set $\mathcal{W}$ into two sets of the same size, the number of nodes of the full BWTS tree is
at least $|\mathcal{W}_0|\times |A|^{\log{|\mathcal{W}_0|}} > 10^8$ 
for $|\mathcal{W}_0|=128$ and $|A|=8$. The proofs of Theorem~\ref{prop.complete} (by induction over the size of $|\mathcal{W}_0|$) and the evaluation can be found in the supplementary material.

\subsection{The BWTS algorithm}

Constructing the complete BWTS tree becomes intractable as the size of $|\mathcal{W}_0|$ grows (there is one node per history till a leaf). 
With good heuristics selecting promising actions, we can open only the most promising actions, and this will be sufficient to obtain an efficient policy $\pi$. Similar to Monte-Carlo Tree Search (MCTS), we approximate the cost of nodes without expanding them, and only expand the most promising ones, as illustrated in Fig.~\ref{fig:BWTS_explanation}. Unlike MCTS, which relies on random rollouts to evaluate the value of a node:

\smallskip
\noindent
\textbf{Strategic rollouts:}
Define $k$ fixed strategies $\sigma_1, \ldots, \sigma_k$ to reach the goal from any state $v$ and any belief set $\mathcal{W}$. 
$D^*$ is one such strategy. Variants can be used, like fixing the direction to favor when an obstacle is encountered, or targeting a particular sequence of intermediate subgoals. 
To evaluate any decision node $(v,\mathcal{W})$, we evaluate 
the cost $C(\sigma_i,v,W)$ of every strategy $\sigma_i$, $i \leq k$ from $v$ in every world $W \in \mathcal{W}$.
The value  $Q(v,\mathcal{W})$ we associate with $(v,\mathcal{W})$ is then: 

\begin{equation}
\label{eq.eval}
Q(v,\mathcal{W})=\min_{i \leq k} \frac{ \sum_{W \in \mathcal{W}} C(\sigma_i,v,W) }{|\mathcal{W}|} \geq 0
\end{equation}

We evaluate $Q(v,a,\mathcal{W})$ for chance nodes by computation from 
the valuation of its two (decision nodes) children.
Strategic rollouts have two advantages over standard Monte-Carlo rollouts: first, they limit the occurrence of loops around states. 
Also, rather than the average value of random rollouts, we consider a min over strategies of the average over worlds, closely matching the cost to evaluate, though we evaluate only a few strategies instead of the full BWTS.

\smallskip
\noindent
\textbf{Exploration bonus for chance nodes:}
Notice that for $k$ fixed strategies, $Q(v,a,\mathcal{W})$ is a fixed evaluation of the cost of a node $(v,a,\mathcal{W})$. Hence it is important to provide a way to not starve some nodes $(v,a,\mathcal{W})$ from being explored, because although their valuation seems less promising than a node $(v,a',\mathcal{W})$, they may ultimately lead to the optimal strategy. This is done by giving an advantage to chance nodes $(v,a,\mathcal{W})$ whose visit count $N(v,a,\mathcal{W})$ --- the number of times the node has been selected during search --- is much smaller than the visit count $N(v,\mathcal{W})$ of its parent decision node.
We use the same formula as in MCTS based on Lower Confidence Bound (LCB) rule~\cite{browne2012survey}.
First, we fix an exploration hyperparameter $c_e >0$ to control the exploration–exploitation trade-off: larger $c_e$ encourages more exploration (potentially better solutions) at the cost of additional compute. The action $\alpha(v,\mathcal{W})$ that will be explored by the BWTS algorithm at a decision node $(v,\mathcal{W})$ will be
\begin{equation}
\label{eq:treepol_or}
\alpha(v,\mathcal{W}) \;\in\; \arg\min_{a\in A}
\Big(Q(v,a,\mathcal{W})\;-\;c_{e}\sqrt{\tfrac{\ln N(v,\mathcal{W})}{N(v,a,\mathcal{W})}} \Big).
\end{equation}

\smallskip
\noindent
\textbf{Selection of decision nodes:}
At a chance node $(v,a,\mathcal{W})$, 
both decision-node children $(v,\mathcal{W}^-)$ and $(v',\mathcal{W}^+)$
should be opened to have a value to node $(v,a,\mathcal{W})$.
However, we know that the subtree with more worlds in the belief set
$\mathcal{W}^-$ or $\mathcal{W}^+$ will need more explorations
to find an efficient strategy than the one with fewer worlds, because evaluating {\em one} complete strategy in BWTS requires opening of a number of branches linear in the number of worlds in the belief set.
We have $\mathcal{W} = \mathcal{W}^- \sqcup \mathcal{W}^+$.
We write $(T,\mathcal{X})$ for either of the two children of $(v,a,\mathcal{W})$.
From $(v,a,\mathcal{W})$, 
the BWTS algorithm will explore $(T,\mathcal{X})$ following the selection rule:
\begin{equation}\label{eq:treepol_and}
\begin{aligned}
\omega(v,a,\mathcal{W})  &= \arg\min_{(T,\mathcal{X})}
\Big( N(T,\mathcal{X}) |\mathcal{W} \setminus \mathcal{X}| \Big).
\end{aligned}
\end{equation}

\smallskip
\noindent
\textbf{Stages of BWTS Construction:}
Every iteration of the BWTS starts at the root, which is a decision node $(v_0,\mathcal{W}_0)$.
We adopt in Algorithm~\ref{alg:BWTS} the standard MCTS~\cite{browne2012survey} stages (selection, expansion, rollout, and backpropagation) with modifications to 3.~Rollouts (see above), and to 4.~Backpropagation (see below).

\begin{algorithm}[t]
\caption{Belief World Tree search}
\label{alg:BWTS}
\begin{algorithmic}[1]
\Require root decision node $(v_0, \mathcal{W}_0)$, total number of iterations $I$, exploration constant $c_{e}$, rollout policies $(\sigma_{i})_{1 \leq k}$.
\State $root \gets (v_0, \mathcal{W}_0)$
\For{$i \in \{0, \dots , I\}$} \Comment{iterate $I$ times}
  \Statex \textbf{\#1. Selection}
  \State $u \gets root$
  \While{$u$ is fully expanded}
    \If{$u$ is a decision node}
      \State $a \gets \alpha(u)$; $u \gets \text{Child}(u,a)$ \Comment{Eq. \ref{eq:treepol_or}}
    \Else
      \State $u \gets \omega(u)$ \Comment{Eq. \ref{eq:treepol_and}}
    \EndIf
  \EndWhile
  \State $(v,\mathcal{W}) \gets u$

  \Statex \textbf{\#2. Expansion}
  \State randomly sample a child $(v,a,\mathcal{W})$ that does not exist in the tree and attach it to the tree
  \State attach $(v,\mathcal{W}^-)$ and $(v',\mathcal{W}^+)$ to the tree

  \Statex \textbf{\#3. Rollouts}
  \State Compute $Q(v',\mathcal{W}^+)$ and $Q(v,\mathcal{W}^-)$ \Comment{Eq. \ref{eq.eval}}

  \Statex \textbf{\#4. Backpropagation}
  \State $v \gets (v,\mathcal{W})$
  \While{$v \neq root$}
    \State update $Q(v)$ \Comment{Eq.~\ref{eq:chance_value} and \ref{eq:decision_value}}
    \State $v \gets parent(v)$
  \EndWhile
\EndFor
\Return $root$
\end{algorithmic}
\end{algorithm}

\smallskip
\noindent
\textbf{\emph{4.~Backpropagation.}}
After computing the evaluations $Q(v',\mathcal{W}^+)$ and $Q(v,\mathcal{W}^-)$, 
we backtrack inductively from the bottom to the root and update the values of ancestor nodes.
For chance nodes $(v, a, \mathcal{W})$ with children 
\((v, \mathcal{W}^-), (v', \mathcal{W}^+)\), 
we set as in Eq.~(\ref{eq:chance}):
\begin{equation}
 \label{eq:chance_value}
Q(v,a,\mathcal{W})= \frac{|\mathcal{W}_1| Q(v_1,\mathcal{W}_1) + |\mathcal{W}_2|Q(v_2,\mathcal{W}_2)}{|\mathcal{W}_1|+|\mathcal{W}_2| = |\mathcal{W}|}    
\end{equation}

\noindent
For decision nodes \((v, \mathcal{W})\),
we set as in Eq.~(\ref{eq:decision}):
\begin{equation}
\label{eq:decision_value}
Q(v,\mathcal{W}) \;\leftarrow\; \min_{a \in A} (C(v, a) \;+\; Q(v,a,\mathcal{W}))
\end{equation}

\medskip
\noindent
\textbf{\emph{BWTS Policy $\pi_I$}:}
After $i$ iterations, the strategy $\pi_i$ is the following: at each decision node, pick the action with the smallest $Q(v,a,\mathcal{W})$.  
We show in Theorem~\ref{thm:BWTS-converges} that $\pi_{i}$ converges to an optimal policy for the (symbolic) InK problem.

\begin{theorem}[Convergence of BWTS to Full-Tree]
\label{thm:BWTS-converges}
Let $(\pi_i)_{i \in \mathbb{N}}$ 
be the sequence of policies produced by the BWTS algorithm.
Then the sequence $C(\pi_i)_{i \in \mathbb{N}}$ converges towards $C^*$, the optimal cost of the (discrete symbolic) InK problem.
\end{theorem}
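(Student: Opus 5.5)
The plan is to show that the BWTS tree eventually becomes the complete BWTS tree, and that from then on the values $Q$ coincide with the exact costs $C$ of the complete tree. Theorem~\ref{prop.complete} then gives optimality of the induced policy. The argument has three steps.

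\textbf{Step 1: finiteness.} The complete tree $\mathcal{T}^*$ is finite. It is finite as a structure: a chance node $(v,a,\mathcal{W})$ with $v^a$ blocked in every world of $\mathcal{W}$ leads back to the same decision node, so we identify equal decision nodes along a branch, or equivalently cut histories at repeated $(v,\mathcal{W})$. Such cycles never lower the min in Eq.~(\ref{eq:decision}), because $C(v,a)>0$ outside $G$. Every other chance node strictly splits $\mathcal{W}$ or is non-informative, and there are finitely many pairs $(v,\mathcal{W})$ with $v\in V$ and $\mathcal{W}\subseteq \mathcal{W}_0$.

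\textbf{Step 2: every node of $\mathcal{T}^*$ is eventually expanded.} This is the main obstacle, and I will argue it by contradiction, in the style of UCT completeness arguments.
\begin{itemize}
\item Suppose some node of $\mathcal{T}^*$ is never expanded. Then some node $u$ is selected infinitely often but has a child that is never visited again, or has only finitely many visits.
\item \emph{Decision nodes.} At $u=(v,\mathcal{W})$, the values $Q$ lie in a bounded set: they are finite because the strategies $\sigma_i$ (e.g.\ $D^*$) reach $G$ whenever reachable, and there are only finitely many trees, hence finitely many $Q$ values. If $N(u)\to\infty$ while $N(u,a)$ stays bounded, the bonus $c_e\sqrt{\ln N(u)/N(u,a)}\to\infty$. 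Rule~(\ref{eq:treepol_or}) therefore eventually selects $a$, a contradiction. So every action of $u$ is selected infinitely often.
\item \emph{Chance nodes.} Rule~(\ref{eq:treepol_and}) picks the child minimising $N(T,\mathcal{X})\,|\mathcal{W}\setminus\mathcal{X}|$. The weights are positive when both children are non-empty, so the visit counts stay proportional. Hence both children are visited infinitely often whenever the parent is. (If one side is empty, that child is a trivial leaf and can be ignored.)
\item \emph{Expansion.} The expansion step adds a fresh random child each time selection stops at a non-fully-expanded node. So by induction on depth, every node of $\mathcal{T}^*$ is eventually in the tree and all its children are attached.
\end{itemize}

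\textbf{Step 3: values become exact, and the policy converges.} Once the tree equals $\mathcal{T}^*$, every leaf has $|\mathcal{W}|=1$. There $Q(v,\{W\})$ is a min over strategies of a cost in a single known world. I will assume, or add, that one of the $\sigma_i$ is the shortest-path strategy in the known world, which $D^*$ is when the world is fully known. Then $Q(v,\{W\})=C(v,\{W\})$. Backpropagation applies Eqs.~(\ref{eq:chance_value})--(\ref{eq:decision_value}), which are exactly Eqs.~(\ref{eq:chance})--(\ref{eq:decision}). Since every node is updated after the last expansion below it, a bottom-up induction over the finite tree gives $Q=C$ everywhere.

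Consequently, for every $i$ past some $i_0$, the policy $\pi_i$ picks at each decision node an action minimising $C(v,a)+C(v,a,\mathcal{W})$. This is the policy $\pi^*$ of Theorem~\ref{prop.complete}, up to ties, and all minimisers give the same value. Hence $C(\pi_i)=C(v_0,\mathcal{W}_0)=C^*$ for $i\ge i_0$, which is in fact stronger than convergence.

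The delicate points are the boundedness of $Q$ in Step 2 and the leaf-exactness assumption in Step 3; I would state both explicitly as assumptions on the rollout strategies.
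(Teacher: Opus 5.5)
Your proposal is correct and follows essentially the same route as the paper's proof. Both argue by contradiction that every node is selected infinitely often, using the diverging exploration bonus at decision nodes and the cardinality-weighted rule (6) at chance nodes. It follows that the tree eventually becomes the complete BWTS tree, after which backpropagation coincides with Eqs.~(2)--(3) and Theorem~1 gives $C(\pi_i)=C^*$.

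Your Step~1 (finiteness via cutting repeated $(v,\mathcal{W})$, matching the paper's cycle-avoidance rule) and your explicit leaf-exactness assumption on the rollout strategies make precise two points the paper leaves implicit.
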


The proof can be found in the supplementary materials. Sketch: Let $BWTS_i$ be the tree after $i$ iterations of Algorithm~\ref{alg:BWTS}
for all  $i \in \mathbb{N}$.
We prove that there exists a number $I$ of iterations after which  $BWTS_i,i>I$ is the complete BWTS tree. 
Henceforth, $C(\pi_i)=C^*, i>I$, following Theorem~\ref{prop.complete}. 
Compared to the full BWTS tree, the 
smallest number of iterations (resp. nodes) is 
$I_0 = |A|\times |\mathcal{W}_0|$ 
(resp. min\_nodes $= 3 I_0$), to reach the leaves 
(and thus evaluate the cost) on all the (uncontrollable) chance branches. Branches from decision nodes are only sparsely explored.
The algorithm needs $O(I \times k \times |V| \times |\mathcal{W}_0|)$ operations to compute $\pi_I$.



\section{Experimental Evaluation}\label{results}

In this section, we outline the experimental setup and evaluate the performance of InK in comparison to the non-InK approach. All experiments were implemented in Python/PyTorch and run on Ubuntu 20.04.6 LTS on a machine with 8P+12E CPU cores (Intel i7-14700KF) and 64 GB of RAM. 
Source code is available at \url{https://github.com/CPS-research-group/ink_bwts}.
To evaluate InK on HRL tasks, we consider a setting where the low-level policy is pre-trained independently before being used to learn the abstract world model. We use the maze navigation task from \cite{rgl2024}, where a goal-conditioned low-level policy pre-trained in a clean maze learns the full abstract world model. Unlike InK, this baseline requires extensive exploration. Note that we selected $D^*$ because the domain is a navigation task; it can be replaced with any other incremental planner depending on the task.
Concretely, we investigate the following:

\smallskip
\noindent
\textbf{InK vs. Non-InK.}
We examine the performance, sample efficiency, and execution time of our proposed InK neurosymbolic HRL in a continuous physical environment, using $D^*$ as the high-level symbolic planner, and compare it with a non-InK HRL approach, RGL~\cite{rgl2024}, which first learns a full model of the concrete system before planning (Section~\ref{res:nesy_hrl}).
We compare on the RGL environments and report only the RGL results, since it is the state-of-the-art non-InK approach.

\smallskip
\noindent
\textbf{Comparison of symbolic planners.}
We evaluate $D^*$, BWTS, and BAMCP~\cite{bamdp2013} on small symbolic environments.
We analyze the impact of prior knowledge, and test also in an InK complex continuous environment in Section~\ref{res:bwts_performance}.

\subsection{InK vs. Non-InK \label{res:nesy_hrl}}
We experiment on three noisy physical Point-Maze environments from RGL \cite{rgl2024}: Four Rooms, Medium Maze, and Hard Maze, where a point agent navigates a planar maze. The action is a continuous 2D position increment $a \in [-1,1]^2$, transitions are stochastic with additive Gaussian noise, and the state is the agent's Cartesian position $s \in \mathbb{R}^2$.
We pretrain the low-level goal-reaching policy in an obstacle-free environment, following the experimental setup of RGL~\cite{rgl2024}.
Both InK and RGL share the same low-level policy. 
The key difference lies in high-level planning, where non-InK methods construct a complete world graph upfront (training) in RGL, whereas InK uses $D^*$ planning at the high-level, and can start reaching a goal from scratch ($\emptyset$). No prior knowledge is used.

We use five different seeds to train the low-level policy (for both InK and non-InK), and for each seed we sample ten random start–goal pairs. We average over these 50 experiments. We first compare in Table \ref{tab:bwts_mazephys_transposed} the number of steps (and time) required to reach the goal for the first time, either from scratch ($\emptyset$), meaning in the training part for RGL. We also compare after training for RGL (which takes extensive time $\geq 500 s, 100000$ samples), and after $75$ random goals have been sampled and reached for InK, which prebuilds a knowledge base, and is much more efficient ($<1s,5000$ samples).
Furthermore, we demonstrate the applicability of InK to an even more complex, high-dimensional domain (29 dimensions): the AntMaze U-Room environment~\cite{todorov2012mujoco}, in which an ant agent with multiple actuated joints (from MuJoCo) navigates a U-shaped corridor. The low-level goal-reaching policy is pre-trained in the same manner as described above.

\smallskip
\noindent
\textbf{Discussion:}
InK requires 30 to 100 times fewer samples than RGL to reach the goal from scratch ($\emptyset$), and after ``training" (reaching 75 goals), InK remains more efficient than RGL in both number of samples (18\% less) and time (20x). Moreover, the ``training'' phase of InK itself is more sample-efficient than RGL training. Over a sequence of goals, which RGL is specifically designed to optimize for, RGL initially benefits from its learned structure, as shown by the initial dip in Fig.~\ref{fig:bwts_vs_rgl}. However, after approximately 25 goals, InK becomes as efficient as RGL and then increasingly more efficient, with the cumulative sample gap exceeding 100,000 and growing further in the case of Hard Maze. InK is consistently better than RGL, even on RGL environments without prior knowledge.

\begin{table}[t]
  \centering
  \caption{InK vs RGL number of samples (\& runtime), $\downarrow$ lower is better, to reach goal from start for 50 different start-goal pairs.}
  \label{tab:bwts_mazephys_transposed}
  \begin{tabular}{@{}lccc@{}}
    \toprule
    Method & Four Rooms & Medium Maze & Hard Maze \\
    \midrule

    \multirow{2}{*}{RGL from scratch ($\emptyset$)}
          & $2080 \pm 701$   & $10412 \pm 3662$            & $17456 \pm 5585$ \\
          & $(96.48s \pm 32.38)$    & $(1436s \pm 608)$     & 
          $(1018s \pm 362)$ \\


    \multirow{2}{*}{InK from scratch ($\emptyset$)}
      & $\mathbf{64.90 \pm 9.17}$    & $\mathbf{147.82 \pm 28.29}$    & $\mathbf{208.92 \pm 41.20}$ \\
      & $\mathbf{(0.02s \pm 0.00)}$     & $\mathbf{(0.05s \pm 0.01)}$       & $\mathbf{(0.04s \pm 0.01)}$ \\

    \midrule

    \multirow{2}{*}{RGL after train}
      & $40.08 \pm 2.97$ & $48.32 \pm 4.38$ & $69.82 \pm 20.74 $ \\
      & $(0.34s \pm 0.02)$  & $(0.28s \pm 0.01)$  & $(0.21s \pm 0.01)$ \\

    
    InK after 75 goals
      & $\mathbf{31.28 \pm 8.59}$    & $\mathbf{40.36 \pm 21.06}$    & $\mathbf{58.16 \pm 18.74}$ \\
      \quad (after ``train'') & $\mathbf{(0.01s \pm 0.002)}$     & $\mathbf{(0.01s \pm 0.006)}$       & $\mathbf{(0.01s \pm 0.003)}$ \\

    \midrule

      \multirow{2}{*}{RGL full training}
      & $100000$ & $150000$ & $300000$ \\
      & $(498.63s \pm 59.56)$  & $(1167.48s \pm 77.82)$  & $(6022.96s \pm 2063.39)$ \\


      InK completing 75 goals
      & $\mathbf{2936.0 \pm 334.36}$ & $\mathbf{3095.4 \pm 251.20}$ & $\mathbf{4651.2 \pm 1396.33}$ \\
      \quad (``training'') & $(\mathbf{0.84s \pm 0.22)}$  & $(\mathbf{0.87s \pm 0.176})$  & $(\mathbf{0.83s \pm 0.25})$ \\

    \bottomrule
  \end{tabular}
\end{table}

\begin{figure}[]
  \centering
  \begin{subfigure}[]{0.32\textwidth}
    \centering
    \includegraphics[width=\linewidth]{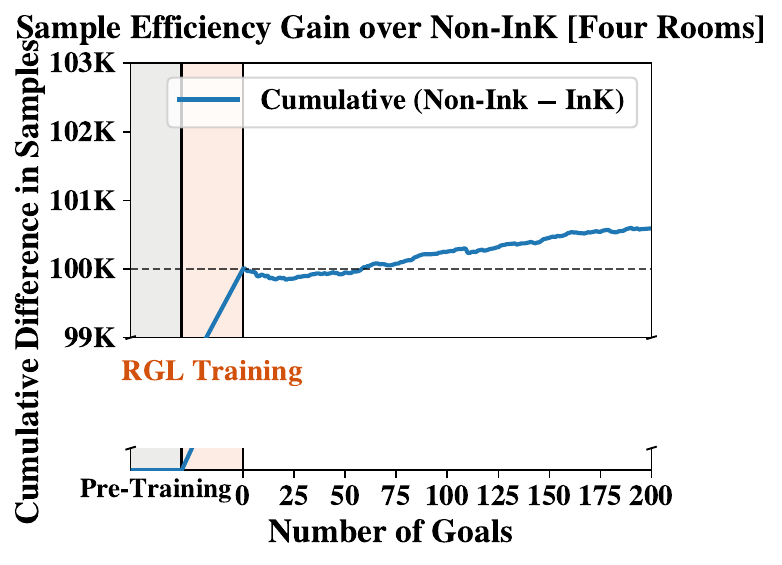}
    \caption{Four Rooms}
    \label{fig:pm_fourrooms}
  \end{subfigure}
  \begin{subfigure}[]{0.32\textwidth}
    \centering
    \includegraphics[width=\linewidth]{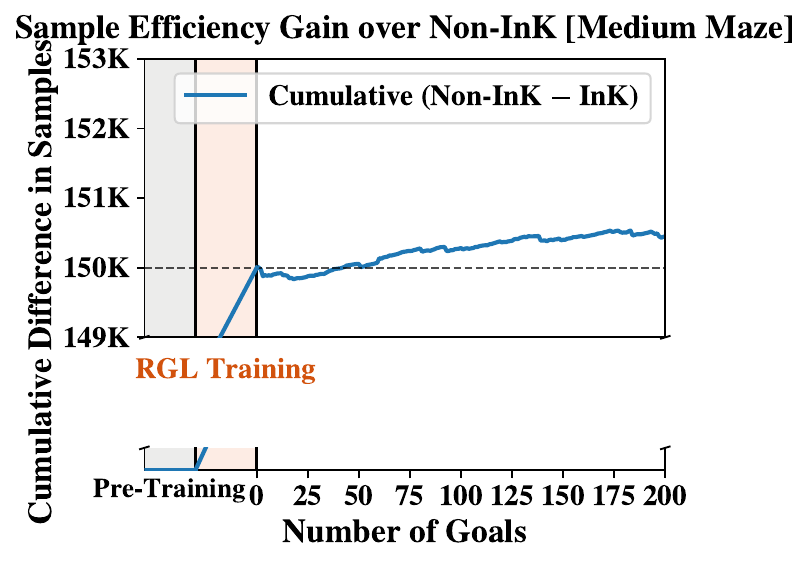}
    \caption{Medium Maze}
    \label{fig:pm_mediummaze}
  \end{subfigure}
  \begin{subfigure}[]{0.32\textwidth}
    \centering
    \includegraphics[width=\linewidth]{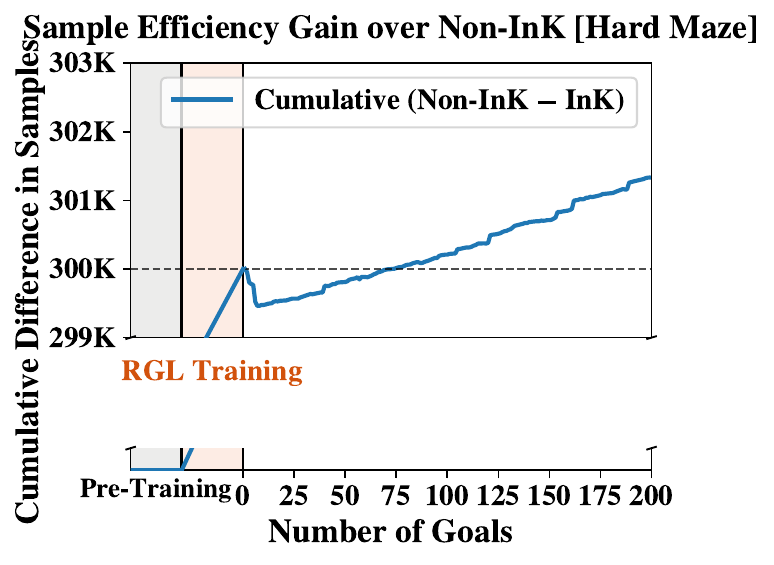}
    \caption{Hard Maze}
    \label{fig:pm_hardmaze}
  \end{subfigure}
  \caption{Sample-efficiency gain of InK over RGL ($>$100K environment steps) in the Point Maze environment. Pre-Training: learning the low-level neural module (same); RGL Training: learning the full knowledge of the concrete world.}
  \label{fig:bwts_vs_rgl}
\end{figure}

\noindent
\textbf{InK in Ant-Maze U-Room}:
InK can handle efficiently even more challenging environments, such as a MuJoCo Ant agent in a U-shaped maze, namely the Ant-Maze U-Room~\cite{todorov2012mujoco}. Compared with the previous benchmark, the (neural) low-level policy has to handle a much more complex physical environment (29 dimensions). We pretrain this neural low-level policy and use the same low-level policy for both InK and non-InK. As above, Non-InK requires an overwhelming number of samples ($\sim$10K) to build the full graph of the abstract world, whereas InK reaches the goal while building a goal-directed InK abstract world using around 1507 steps (with a $D^*$ high-level planner and even less, 1134 steps, with a BWTS high-level planner).

\begin{figure}[h!] 
  \centering
  \begin{subfigure}{0.32\linewidth}
    \centering
    \includegraphics[width=0.6\linewidth]{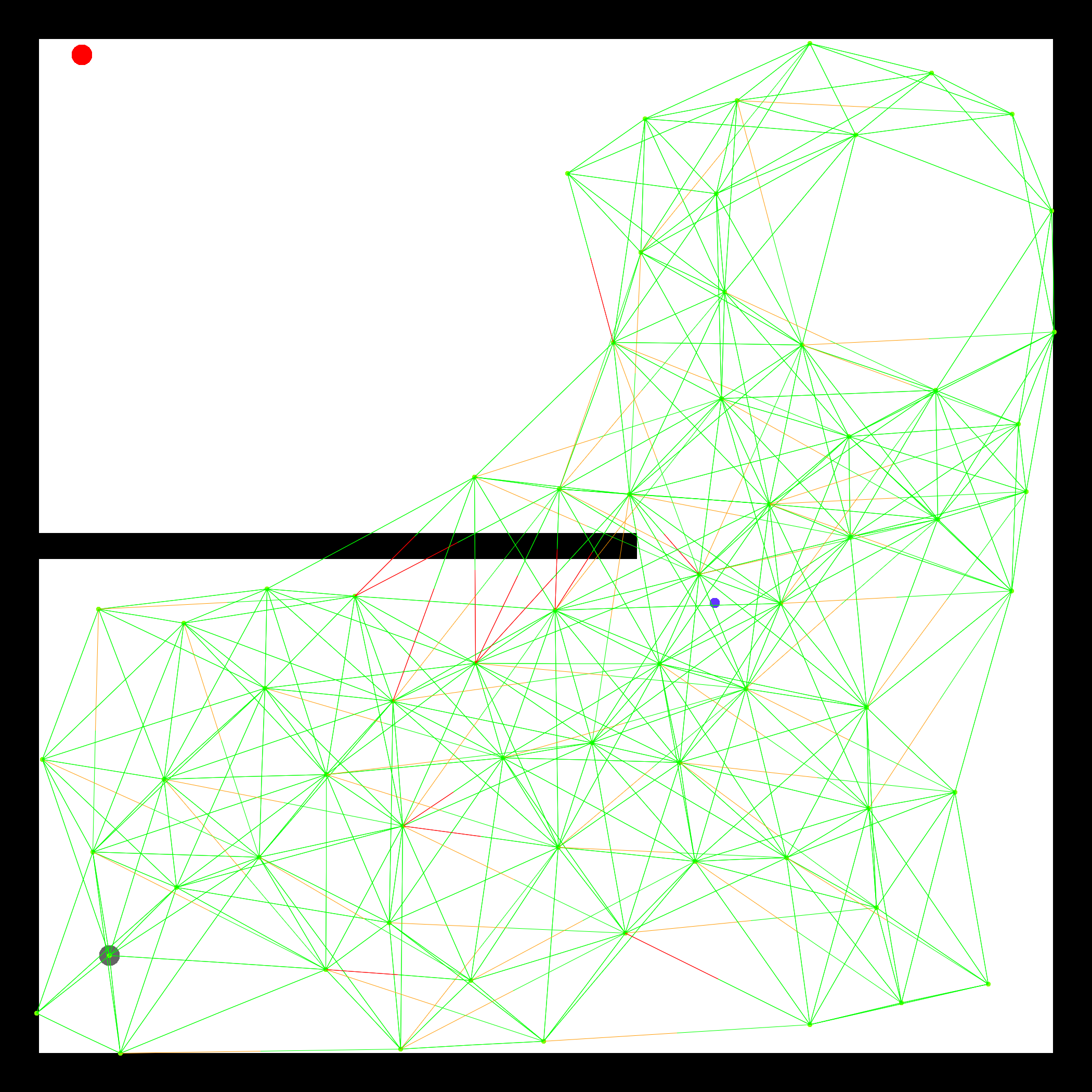}
    \caption{Non-InK (RGL)}
    \label{fig:ant_maze_rgl}
  \end{subfigure}
  \begin{subfigure}{0.32\linewidth}
    \centering
    \includegraphics[width=0.6\linewidth,trim={0 0 0 4mm},clip]{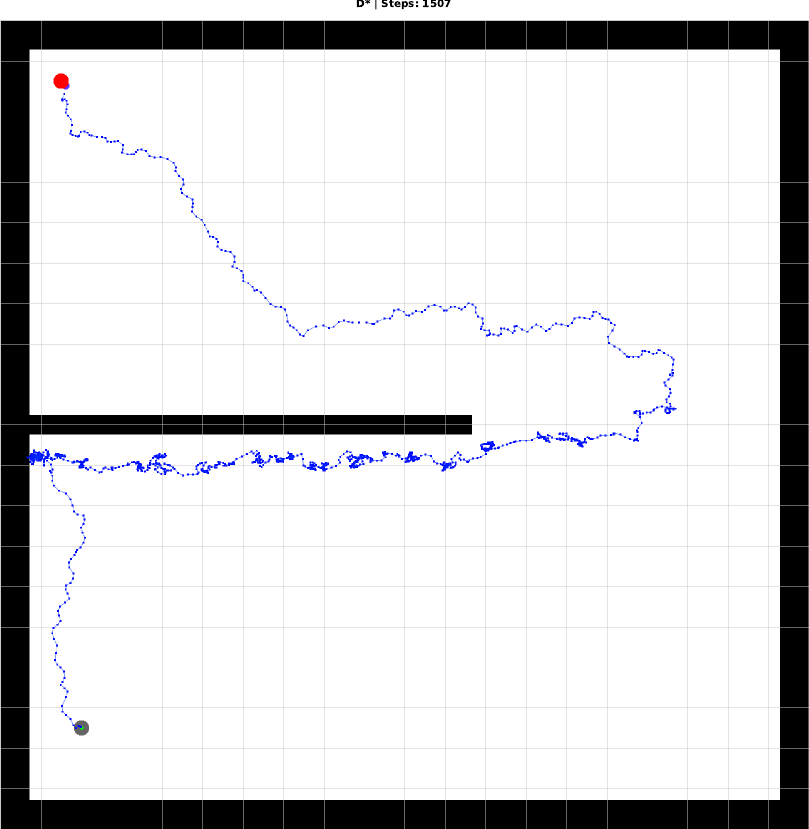}
    \caption{InK-$D^*$}
    \label{fig:ant_maze_dstar}
  \end{subfigure}
  \begin{subfigure}{0.32\linewidth}
    \centering
    \includegraphics[width=0.6\linewidth,trim={0 0 0 4mm},clip]{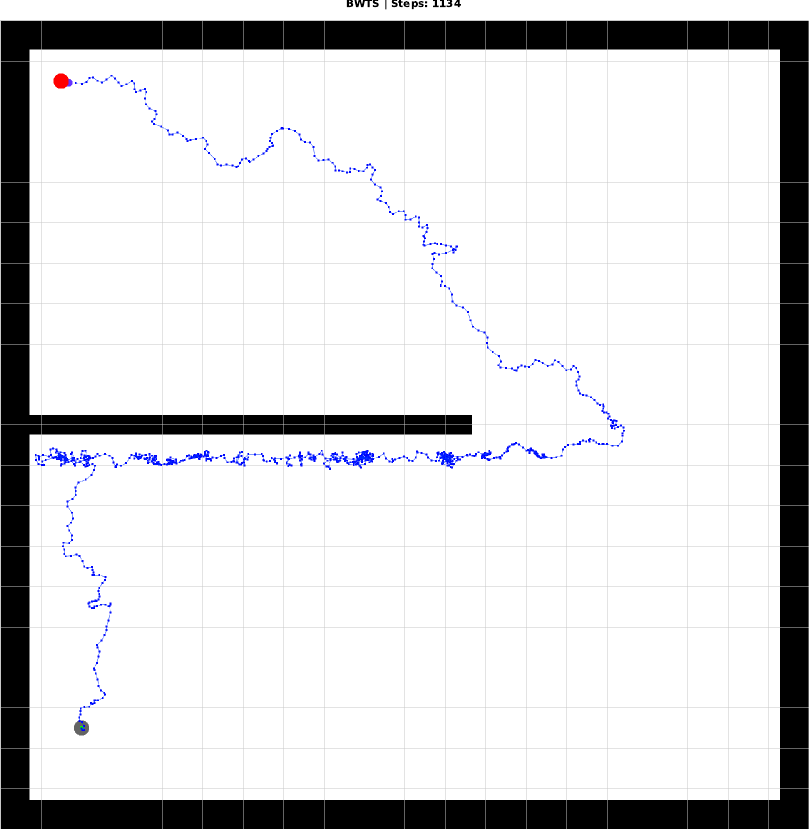}
    \caption{InK-BWTS}
    \label{fig:ant_maze_bwts}
  \end{subfigure}
  \caption{Ant-Maze U-Room RL environment. \textcolor{blue}{Blue} dot: start; \textcolor{red}{red} dot: goal.}
  \label{fig:ant_maze}
\end{figure}

\subsection{Comparison between symbolic planners} \label{res:bwts_performance}

In this section, we compare symbolic planners $D^*$, BWTS, and BAMCP~\cite{bamdp2013}, with and without prior structural knowledge. We first consider simple synthetic environments with the full set $\mathcal{W}_0$ of possible worlds: $10\times10$ grids with a single horizontal or vertical wall, each having one opening. Let $\mathcal{W}_h$ be $10\times10$ worlds with a horizontal wall at each admissible row ($y=1$–$8$, excluding $0,9$ for start and goal) and an opening at each wall position ($x=0$–$9$), giving $|\mathcal{W}_h|=80$. The vertical-wall set $\mathcal{W}_v$ is defined analogously, and $\mathcal{W}_{hv} = \mathcal{W}_h \cup \mathcal{W}_v$. Details on belief set construction are in the supplementary materials.

For BWTS, we use an exploration bonus of $c_e=1.5$ and $I=200 000$ iterations for all belief sets.
The rollout strategies $\sigma_k$ combine five primary targets (the goal or one of four border midpoints) with two wall-sweeping variants, yielding $10$ strategies in total.
For comparison, we run $D^*$ independently on each belief world.
For BAMCP, the structural knowledge of $\mathcal{W}_{h,v,hv}$ is particularly hard to encode, with strong dependencies between different positions in the grid: if there is a wall in cell $(3,3)$, then there cannot be a wall in cell $(7,7)$. We first experimented with a prior distribution with probability $0.9$ to not be a wall and $0.1$ to be a wall, matching the probabilities in $\mathcal{W}_{h,v,hv}$. However, stochastic rollouts in such an open environment seldom find the goal (lots of loops), and BAMCP is very unstable 
(examples in supplementary material). Remember BWTS uses {\em strategic} rollouts instead of stochastic rollouts.
The probabilities we use are $.6/.4$ for a cell to be no wall/ a wall. 
The average results over all the belief worlds are summarized in Table~\ref{tab:bwts_vs_dstar}. 
As $D^*$ is deterministic, results are always the same on these deterministic symbolic environments.
For BWTS and BAMCP, we report the mean $\pm$ standard deviation over 10 different random seeds.

\begin{table}[b!]
  \centering
  \caption{Expected number of steps (\& runtime), $\downarrow$ lower is better, for the BWTS-learned policy, BAMCP, and $D^*$.}
  \label{tab:bwts_vs_dstar}
  \begin{tabular}{@{}lccc@{}}
    \toprule
    Belief Worlds & $\mathcal{W}_h$ & $\mathcal{W}_v$ & $\mathcal{W}_{hv}$ \\
    \midrule
    
    \multirow{2}{*}{$D^*$}
      & $24.56$ & $23.19$ & $23.88$ \\
      & $\mathbf{(0.01s)}$ & $\mathbf{(0.01s)}$ & $\mathbf{(0.01s)}$ \\

    \cmidrule(lr){2-4}

    \multirow{2}{*}{BWTS}
      & $\mathbf{21.22 \pm 0.20}$ & $\mathbf{21.22 \pm 0.16}$ & $\mathbf{23.36 \pm 0.08}$ \\
      & $(1.05s \pm .13)$             & $(1.06s \pm .1)$             & $(52s \pm 1.6)$ \\

    \cmidrule(lr){2-4}

    \multirow{2}{*}{BAMCP}
      & $28.42 \pm 1.16$ & $28.89 \pm 1.29$ & $40.42 \pm 1.39$ \\
      & $(30.88s \pm 1.65)$    & $(28.9s \pm 1.05)$            & $(92.1s \pm 2.19)$ \\

    \bottomrule
  \end{tabular}
\end{table}

\smallskip
\noindent
\textbf{Discussion}: 
Table~\ref{tab:bwts_vs_dstar} shows that BWTS achieves lowest expected cost. The gains are largest for belief worlds with only horizontal ($15.7\%$ vs $D^*$) or vertical walls ($9.3\%$ vs $D^*$).
For $\mathcal{W}_{hv}$, the margin becomes negligible, at $2.2\%$.
To understand the source of performance gain for BWTS in $\mathcal{W}_h$, we analyzed the trajectories (from start to goal) for two belief worlds (one in blue and one in black) in $\mathcal{W}_h$ as shown in Fig.~\ref{fig:bwts_vs_ds_wh}.
These examples demonstrate that BWTS exploits belief over possible doorway locations: the agent proceeds straight and initiates a directed wall sweep from the start, probing for the opening without costly backtracking. In a single instance like the blue world in Fig.~\ref{fig:bwts_vs_ds_wh}, $D^*$ may appear optimal because it commits to the correct sweep direction. However, in the black world, $D^*$ chooses the wrong initial sweep and then must backtrack, incurring extra cost. By planning over belief worlds, BWTS avoids this return sweep by adopting a probing strategy that minimizes expected cost \emph{before} the opening is revealed. In $\mathcal{W}_{hv}$, there is no efficient ``no backtracking'' strategy, and the runtime of BWTS becomes excessive for the negligible gain in number of steps.

\begin{figure}[t!]
  \centering
    \centering
    \includegraphics[width=0.26\linewidth]{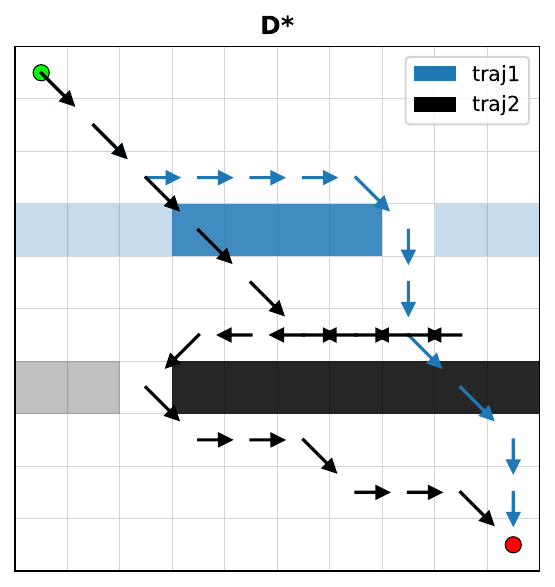}%
    \includegraphics[width=0.26\linewidth]{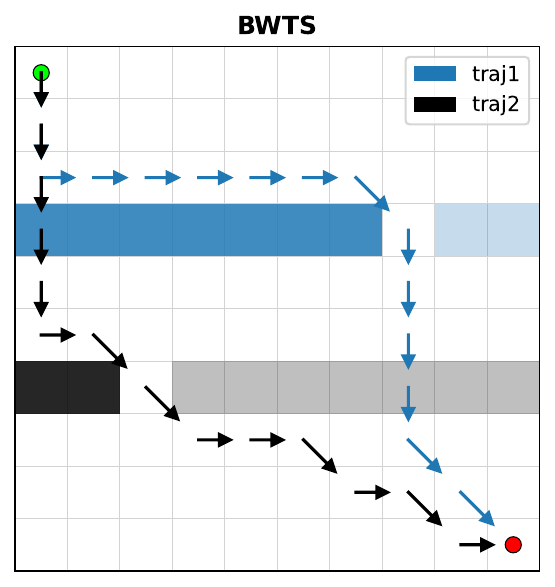}
    \includegraphics[width=0.26\linewidth]{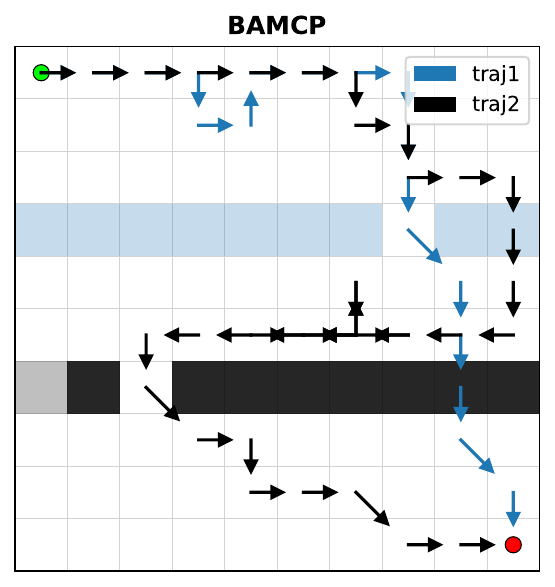}

  \caption{$D^\ast$, BWTS, and BAMCP for $W \in \mathcal{W}_h$ on {\color{blue}Blue} and Black Worlds. Greyed-out walls: unseen; highlighted walls: discovered en route.}
  \label{fig:bwts_vs_ds_wh}
  \vspace{-0.2cm}
\end{figure}

BAMCP reveals a stochastic and inconsistent strategy, sometimes making unnecessary detours, with an expected cost exceeding 28 steps compared to $D^*$'s average of under 25 steps, and a prohibitive runtime.

\begin{table}[t]
  \centering
  \caption{InK-BWTS vs InK-$D^*$ number of samples (\& runtime) to reach goal from start for 10 different start-goal pairs.}
  \label{tab:bwts_vs_dstar2}
  \begin{tabular}{@{}lc@{}}
    \toprule
    Method & Physical RGL Environments \\
    \midrule

    \multirow{2}{*}{InK-$D^*$}
      & $87.70 \pm 15.36$  \\
      & $(\mathbf{0.02s \pm 0.00})$ \\

    \multirow{2}{*}{InK-BWTS no prior Knowledge}
      & $\mathbf{86.97 \pm 13.21}$ \\
      & $(62.56s \pm 30.3)$ \\

    \midrule

    \multirow{2}{*}{\begin{tabular}{@{}l@{}}
        InK-BWTS w/ prior Knowledge \\
        construction time: ($2.66s \pm 0.88$) +
        \end{tabular}}
      & $\mathbf{45.97 \pm 22.97}$ \\
      & $(\mathbf{0.01s \pm 0.01})$ \\

    \bottomrule
  \end{tabular}
\end{table}

We finally compare in Table \ref{tab:bwts_vs_dstar2} both $D^*$ and BWTS as symbolic planners in the high-level of InK, on {\em resized} physical environments of RGL (so that the 3 point mazes have the same size). We compared without prior knowledge, with BWTS using the symbolic set $\mathcal{W}_{hv}$ of worlds to encode the position of the next wall (when a wall is met, we rerun BWTS from the full set $\mathcal{W}_{hv}$ to model the position of the following wall). 
InK-BWTS is slightly more efficient than InK-$D^*$, although runtime is much longer,
consistent with Table \ref{tab:bwts_vs_dstar} ($\mathcal{W}_{hv}$).
With prior knowledge, BWTS is initialized with the set $\mathcal{W}_{3}$ consisting of three worlds: 4 Rooms, Medium Maze, and Hard Maze. 
With $\mathcal{W}_{3}$, BWTS is constructed only once, and it is then used across the mazes with much faster execution time ($2.66s$). 
With prior knowledge, InK-BWTS is much more efficient than InK-$D^*$, halving the number of samples. Overall, when there is {\em no} prior structural knowledge, InK-$D^*$  is preferable.
With prior structural knowledge, InK-BWTS is more efficient due to BWTS’s optimality, at the cost of increased run time.

\section{Conclusion}\label{conclusion}
This paper proposes a {\em neurosymbolic} HRL framework with incremental knowledge (InK), where a symbolic high-level planner guides a neural low-level controller. The high-level planner uses InK for planning (e.g. with $D^*$), and directs the low-level controller to associated subgoals. 
 It combines the advantages of purely symbolic InK framework (e.g. Geometric Motion Planners \cite{tamper24,noseworthy2021active}) in terms of sample complexity, and of non-InK neurosymbolic framework (e.g. SORB \cite{eysenbach2019sorb} and RGL \cite{rgl2024}) that can handle complex model-free physical environments. It generates optimal strategies in complex physical environments with sparse-reward (e.g. a MuJoCo ant agent navigating a Maze), while staying efficient in terms of number of samples.
Further, when prior knowledge on the world is available, we develop the BWTS algorithm, which searches for an optimal policy averaged over belief sets. Empirically, in terms of sample efficiency, InK(-$D^*$) outperforms RGL, a non-InK method, from scratch ($\emptyset$), asymptotically, and in terms of training, without using prior knowledge. When prior knowledge is available, InK-BWTS further outperforms InK-$D^*$, with increased compute time as a trade-off.
Overall, these results demonstrate that neurosymbolic HRL with InK provides a practical approach to sample-efficient learning, yielding gains exceeding 100K environment steps in Point Maze.
We demonstrated the advantage of InK using navigation as a case study. For future work, we plan to extend InK to other domains such as manipulation using PDDL-based incremental planners. Additionally, while BWTS handles structural prior knowledge and BAMCP handles probabilistic priors, neither addresses both; combining these complementary strengths into a unified algorithm (as BAWMCP) is a promising direction.

\begin{credits}
\subsubsection{\ackname} This research was conducted as part of the DesCartes program and was supported by the National Research Foundation, Prime Minister’s Office, Singapore, under the Campus for Research Excellence and Technological Enterprise (CREATE) program. This research/project is also supported by the National Research Foundation, Singapore and DSO National Laboratories under the AI Singapore Programme (AISG Award No: AISG2-RP-2020-017). 
The second author is partly supported by ANR-23-PEIA-0006 SAIF.

\subsubsection{\discintname}
The authors have no competing interests to declare that are
relevant to the content of this article.
\end{credits}

%
%
%
\bibliographystyle{splncs04}
\bibliography{references}
%

\newpage
\appendix
\setcounter{theorem}{0}
\section{Theoretical Analysis on BWTS}

\begin{theorem}
\label{thm:1}
The policy $\pi^*$ choosing in $(v,\mathcal{W})$ action $a$ minimizing $C(v,a,\mathcal{W})$ has average cost $C(\pi^*)=C(v_0,\mathcal{W}_0)$. Further, $C(\pi^*)=C^*$ the optimal cost of the InK problem.
\end{theorem}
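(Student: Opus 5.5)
We prove both claims simultaneously by strong induction on $|\mathcal{W}|$. The inductive statement, for every node $(v,\mathcal{W})$ of the complete BWTS tree, is: (i) the policy $\pi^*$, started at $(v,\mathcal{W})$, has cost averaged over $W\in\mathcal{W}$ equal to $C(v,\mathcal{W})$; and (ii) no policy started from $v$ with belief $\mathcal{W}$, meaning one that may use the history leading to this node, achieves a smaller average cost over $\mathcal{W}$. Theorem~\ref{thm:1} is the case of the root $(v_0,\mathcal{W}_0)$, where the average over $\mathcal{W}_0$ is exactly $C(\pi)$ of Eq.~(\ref{eq1}).

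We read Eq.~(\ref{eq:decision}) as a minimum of $C(v,a)+C(v,a,\mathcal{W})$, and we let $\pi^*$ pick an action attaining that minimum. This matches "minimizing $C(v,a,\mathcal{W})$" when action costs are uniform.

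\textbf{Base case.} If $|\mathcal{W}|=1$, say $\mathcal{W}=\{W\}$, the world is known. $\pi^*$ follows a shortest path in $W$, so its cost is $C(v,\{W\})$. No policy can beat a shortest path in a fixed deterministic world, which gives (ii).

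\textbf{Inductive step.} Take $|\mathcal{W}|\ge 2$ and an action $a$. The chance node $(v,a,\mathcal{W})$ splits $\mathcal{W}$ into $\mathcal{W}^-$ (stay at $v$) and $\mathcal{W}^+$ (move to $v^a$). Every world in $\mathcal{W}^-$ produces the same observation, and likewise for $\mathcal{W}^+$. So after playing $a$, any policy $\pi$ behaves on $\mathcal{W}^\pm$ as a policy $\pi^\pm$ started at the corresponding child. The average cost of $\pi$ over $\mathcal{W}$ therefore decomposes as
\[
C(v,a)+\frac{|\mathcal{W}^-|\,\overline{C}(\pi^-,\mathcal{W}^-)+|\mathcal{W}^+|\,\overline{C}(\pi^+,\mathcal{W}^+)}{|\mathcal{W}|},
\]
where $\overline{C}$ denotes the average over the indicated set. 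This uses $\mathcal{W}=\mathcal{W}^+\sqcup\mathcal{W}^-$, and empty branches contribute nothing.

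When both children are strictly smaller than $\mathcal{W}$, the induction hypothesis applies to each. For $\pi^*$ this yields $C(v,a)+C(v,a,\mathcal{W})$ by Eq.~(\ref{eq:chance}), and its choice of $a$ makes this equal to $C(v,\mathcal{W})$. For an arbitrary $\pi$, each $\overline{C}(\pi^\pm,\cdot)\ge C(\cdot)$, so its cost is at least $C(v,a)+C(v,a,\mathcal{W})\ge C(v,\mathcal{W})$.

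\textbf{Main obstacle: non-shrinking branches.} One of $\mathcal{W}^\pm$ may equal $\mathcal{W}$, for instance when $a$ probes a cell already known to be a wall or known to be free. The induction on $|\mathcal{W}|$ then stalls, and the tree is not even finite without extra care. I would handle this with a nested induction: outer on $|\mathcal{W}|$, inner on the length of the remaining path within a fixed belief. Restricted to a fixed belief $\mathcal{W}$, the non-informative moves form a deterministic shortest-path problem on $V$. Its terminal costs are the values of the informative actions, which are already known by the outer induction. Because $C(v,a)>0$ off $G$, this problem has well-defined Bellman values, and $C(\cdot,\mathcal{W})$ is its least fixed point. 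The positive costs also rule out infinite loops being beneficial, since such policies incur cost $+\infty$.

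I would therefore define the complete tree's values as this least fixed point, equivalently pruning loops in a belief that revisit the same $(v,\mathcal{W})$. The comparison argument above then goes through unchanged.

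Finally, $\pi^*$ depends on history only through $(v,\mathcal{W}_\rho)$. This is harmless: part (ii) quantifies over all history-dependent policies, and the decomposition step shows that history beyond $\mathcal{W}_\rho$ cannot help.
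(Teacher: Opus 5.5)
Your proof is correct and follows the same route as the paper: induction on $|\mathcal{W}|$, splitting at the root chance node, applying the hypothesis to both children, and bounding an arbitrary policy through the same decomposition (you use one inequality where the paper does a case split on whether the competitor's root action ties or is strictly worse). Your nested induction for non-informative actions (where one of $\mathcal{W}^{\pm}$ equals $\mathcal{W}$) and your inclusion of $C(v,a)$ in $\pi^*$'s choice fill in points the paper's sketch skips, since it simply asserts that both children are strictly smaller than $\mathcal{W}_0$.
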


\begin{proof}[Sketch of Proof of Theorem 1.]
The proof is by induction on $|\mathcal{W}|$.
For  $|\mathcal{W}|=1$, the proof is trivial.

For the inductive step:
$\pi^*$ plays in decision nodes $(v,\mathcal{W})$ the action $a$ to minimize
$C(v,a,\mathcal{W})$.
Consider the chance node $(v_0,a_0,\mathcal{W}_0)$ with $a_0$ picked by $\pi^*$, and its two decision node children $(v_0,\mathcal{W}_0^-)$ and $(v_0^{a_0},\mathcal{W}_0^+)$.
We can apply the induction hypothesis on both $\mathcal{W}_0^-$ and
$\mathcal{W}_0^+$ which are strictly smaller than $\mathcal{W}_0$, to obtain that 
$\pi^*$ is optimal on both subtrees. 
By definition, we have $C(\pi^*)=C(v_0,\mathcal{W}_0)$.

Take any strategy $\pi'$.
Then consider the action $a'$ picked by $\pi'$ at the root, 
and $a$ picked by $\pi^*$.
If $C(v_0,a',\mathcal{W}_0) = C(v_0,a,\mathcal{W}_0)$,
then we can change $\pi^*$ to picking $a'$ at the root instead of $a$ without changing $C(\pi^*)$. So we can assume wlog that $a=a'$. We can then apply the induction hypothesis to both subtrees of $(v_0,a',\mathcal{W}_0)$ to find that 
$C(\pi') \geq C(\pi^*)$.
The last case is that $C(v_0,a',\mathcal{W}_0) > C(v_0,a,\mathcal{W}_0)$, by choice of $\pi^*$ minimizing $C(v_0,a,\mathcal{W}_0)$. Again, applying the induction hypothesis on both subtrees of $C(v_0,a',\mathcal{W}_0)$, we obtain that the best policy for $C(v_0,a',\mathcal{W}_0)$
is strictly worse than for $C(v_0,a,\mathcal{W}_0)$, and $C(\pi')>C(\pi^*)$.
So $C^* = \inf_{\pi'} C(\pi') \geq C(\pi^*) \geq C^*$ and we are done.
\end{proof}

\noindent
{\bf Evaluation of the size of the full BWTS tree:} 
Assuming $\mathcal{W}=  \mathcal{W}^+ \sqcup \mathcal{W}^-$ always partitions the belief set $\mathcal{W}$ into two sets of the same size, the number of nodes of the full BWTS tree is
at least $|\mathcal{W}_0|\times |A|^{\log{|\mathcal{W}_0|}} > 10^8$ 
for $|\mathcal{W}_0|=128$ and $|A|=8$. 

Consider the skeleton of the full BWTS tree, with only chance nodes: the tree is a balanced complete binary tree of depth $\log{|\mathcal{W}_0|}$, because after $\log{|\mathcal{W}_0|}$ chance nodes, there is a unique world $W$ and the node is a leaf. 
With all decision possibilities, 
there are $|A|$ choice nodes children of the first chance node. 
Assuming after one choice there is a partition of $\mathcal{W}$, we have $|A|$ times the skeleton of depth $\log{|\mathcal{W}_0|-1}$. At this end, it gives at least 
$|\mathcal{W}_0|\times |A|^{\log{|\mathcal{W}_0|}} = |\mathcal{W}_0|^{1+\log(|A|)}$ nodes.

The less optimistic case is that a choice node does not provide any new information and the set $\mathcal{W}$ stays the same. 
We can regroup different nodes with the same $(v,\mathcal{W})$, as they will have the same future.
So there can be at most $|V|$ nodes $(v,\mathcal{W})$ with the same belief set $\mathcal{W}$.
This gives at most 
$|\mathcal{W}_0|\times |V|^{\log{|\mathcal{W}_0|}} = 
|\mathcal{W}_0|^{1+\log(|V|)}$
nodes.

\begin{theorem}[Convergence of BWTS to Full-Tree]
\label{thm:BWTS-converges}
Let $(\pi_i)_{i \in \mathbb{N}}$ 
be the sequence of policies produced by the BWTS algorithm.
Then the sequence $C(\pi_i)_{i \in \mathbb{N}}$ converges towards $C^*$, the optimal cost of the (discrete symbolic) InK problem.
\end{theorem}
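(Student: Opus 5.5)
The plan is the one announced in the sketch after Theorem~\ref{thm:BWTS-converges}. We show that after finitely many iterations the tree $BWTS_i$ coincides with the complete BWTS tree. Its backed-up values $Q$ then equal the exact costs $C$ of Eqs.~(\ref{eq:chance})--(\ref{eq:decision}). We conclude with Theorem~\ref{thm:1}.

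\emph{Step 1: the complete tree is finite.} We first fix the finite object to which the search converges. Along a root-to-node path, the belief set can only shrink, since $\mathcal{W}\in\{\mathcal{W}^+,\mathcal{W}^-\}$. A chance node with $\mathcal{W}^+=\emptyset$ or $\mathcal{W}^-=\emptyset$ gives no information. As in the size evaluation, we merge nodes with identical $(v,\mathcal{W})$, because they have the same future. This makes the number of distinct decision nodes at most $|V|\cdot 2^{|\mathcal{W}_0|}$. Each such node has $|A|$ chance children and each chance child has two decision children, so the complete (merged) tree $\mathcal{T}^*$ has finitely many nodes. Alternatively, one can simply cut branches in which the belief set is unchanged and the vertex repeats; such branches are never needed by an optimal policy, because costs are nonnegative.

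\emph{Step 2: every node is eventually expanded.} Each iteration expands exactly one new chance node together with its two decision children, unless selection reaches a leaf $(v,\{W\})$. It therefore suffices to show that selection never gets stuck revisiting already-complete subtrees forever. Suppose, for contradiction, that some node $u$ of $\mathcal{T}^*$ is never created. Take a shallowest such $u$ and let $p$ be its existing parent. Consider two cases.
\begin{itemize}
\item If $p$ is a decision node, $p$ would be fully expanded only after all of its actions are created. So $p$ must never be reached again after some time, or else expansion would eventually sample $u$.
\item If $p$ is a chance node, both children are attached together, so this case cannot occur.
\end{itemize}
We then lift the argument to the root. At each decision node on the path, the LCB rule~(\ref{eq:treepol_or}) has a bonus $c_e\sqrt{\ln N(v,\mathcal{W})/N(v,a,\mathcal{W})}$ that diverges for any action whose count stays bounded while the parent count $\to\infty$. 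The values $Q$ lie in a bounded range: they are nonnegative and bounded above by the finite rollout costs of the fixed strategies. Hence every child of an infinitely visited decision node is visited infinitely often. At chance nodes, rule~(\ref{eq:treepol_and}) picks the child minimizing $N(T,\mathcal{X})\,|\mathcal{W}\setminus\mathcal{X}|$. The weights are positive whenever both children are nonempty, so both children are visited infinitely often, in proportion to their world counts. The root is visited at every iteration, so induction along the path shows that $p$ is visited infinitely often. This contradicts the assumption that $u$ is never created. Hence there is an $I$ with $BWTS_i\supseteq\mathcal{T}^*$ for all $i>I$.

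\emph{Step 3: values become exact.} Once the tree is complete, every leaf is $(v,\{W\})$. Its rollout value $Q$ must equal the shortest-path cost in $W$. To guarantee this, we include the known-world shortest path among the strategies $\sigma_i$, or we define leaf values by $A^*$ as in the complete-tree definition. Backpropagation applies exactly Eqs.~(\ref{eq:chance_value})--(\ref{eq:decision_value}), which mirror Eqs.~(\ref{eq:chance})--(\ref{eq:decision}). Each leaf-to-root path is updated after the final expansion, so bottom-up induction gives $Q=C$ at all nodes. Then $\pi_i$ selects actions minimizing $C(v,a,\mathcal{W})$, and Theorem~\ref{thm:1} yields $C(\pi_i)=C^*$ for all $i>I$. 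Convergence follows trivially. There is one subtlety: Eq.~(\ref{eq:decision}) minimizes $C(v,a)+C(v,a,\mathcal{W})$, while $\pi_i$ minimizes $Q(v,a,\mathcal{W})$. We would read the policy as including the action cost, so that it matches Theorem~\ref{thm:1}.

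The main obstacle is Step 2, namely the interplay between the finiteness of the tree and the infinite-visit argument. Without merging or pruning, the unmerged tree may be infinite, because looping actions that reveal nothing can repeat indefinitely. The exploration bonus would then spread visits over an infinite set and never complete it. We would first try to handle this through merging of identical $(v,\mathcal{W})$ (a transposition table), and argue that the algorithm implicitly uses it. Failing that, we would restrict the claim to the finite subtree of information-gaining, non-repeating histories. We would also have to show that optimal policies live in that subtree, using nonnegative costs and the fact that repeating a state with the same belief cannot help.
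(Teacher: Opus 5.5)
Your proposal is correct and follows the paper's proof. Every node of $BWTS_i$ is selected infinitely often: at decision nodes the LCB bonus diverges for rarely chosen actions while $Q\geq 0$, and at chance nodes rule~(\ref{eq:treepol_and}) forces both children to be visited infinitely often. Hence a shallowest missing node would eventually be added, the tree becomes complete, and backpropagation mirroring Eqs.~(\ref{eq:chance})--(\ref{eq:decision}) together with Theorem~\ref{thm:1} finishes the argument.

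Your extra care refines the paper's proof rather than changing its method, and it covers points the paper glosses over:
\begin{itemize}
\item The finiteness you need is exactly supplied by the paper's cycle-avoidance rule in the expansion stage, which is your pruning option. So no transposition table is required.
\item Your remarks on exact leaf values, on empty belief children, and on including $C(v,a)$ in the extracted policy address real gaps in the paper's own proof.
\end{itemize}
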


\begin{proof}[Proof of Theorem 2.]
Let $BWTS_i$ be the tree after $i$ iterations of the BWTS algorithm, 
for all  $i \in \mathbb{N}$.
We will prove that there exists a number $I$ of iterations after which  $BWTS_i$ is
the complete BWTS tree for all $i>I$. Henceforth, $\pi_i$ will satisfy
$C(\pi_i)=C^*$ for all $i >I$, following Theorem 1.

We claim that for all $i$, all nodes of $BWTS_i$ will be visited in the selection step of Algorithm 1 infinitely many times.
We reason by contradiction: it means that there exists a node $u$
and an index $I_u$ after which $u$ is not visited.
Among all such nodes $u$, consider a node $u_0$ closest to the root, and $U_0 \neq \emptyset$ the finite set of siblings of $u_0$ (including $u_0$) which are visited only finitely many times. There exists $I_0$ so that after $I_0$, no node from $U_0$ is visited anymore. Let $v_0$ be the parent of $u_0$. By choice of $u_0$, 
$v_0$ is visited infinitely many times. 

If $v_0$ is a chance node, 
then $u_0$ is a decision node with only one sibling $u'_0$.
As $v_0$ is seen infinitely often and $u_0$ is not seen after $I_0$, it means that 
$u'_0$ is seen infinitely often. 
So $N(u_0)$ will be fixed after $I_0$, while 
$N(u'_0)$ will be increasing towards $+\infty$.
Following Eq. (6), the argmin will eventually select $\omega(v_0)=u_0$
after $I_0$, a contradiction with $u_0$ is not visited after $I_0$.

Similarly, if $v_0$ is a decision node, then $U_0$ is a set of chance nodes.
$N(v)$ tends towards $\infty$.
For all $u \in U_0$, $N(u)$ and $Q(u)$ are constant after $I_0$.
Hence $Q(u) - C_e \sqrt{\frac{\ln N(v)}{N(u)}}$ converges towards $-\infty$.
On the other hand, for every $u' \notin U_0$ sibling of $u_0$, 
$N(u')$ tends towards $\infty$.
At least one $u_0'$ has $N(u_0')> \frac{1}{|A|} N(v)$.
As $Q(u'_0)>0$, there exists $I_1$ with
$Q(u'_0) - C_e \sqrt{\frac{\ln N(v)}{N(u'_0)}} >-1$ for all $i > I_1$.
Hence there exists an $I_2 > \max(I_0,I_1)$,  
such that for all $i> I_2$, for all $u \in U_0$, 
$Q(u) - C_e \sqrt{\frac{\ln N(v)}{N(u)}} <-1$.
As $u'_0$ is visited infinitely often ($u'_0 \notin U_0$), there is an index $i >I_2$ so that $\alpha$ selects $u'_0$ at index $i$.
This is a contradiction with the argmin of Eq. (5), as any node $u \in U_0$
has lower $Q(u) - C_e \sqrt{\frac{\ln N(v)}{N(u)}} <-1 < Q(u'_0) - C_e \sqrt{\frac{\ln N(v)}{N(u'_0)}}$.

So in both cases, we get a contradiction, proving that for all $i$ all nodes of $BWTS_i$ will be visited infinitely often. 
We claim then that there exists an $i$ so that $BWTS_i$ is the complete BWTS.
Again, we prove that by contradiction. If this is not the case, then there is a node of the complete BWTS that is in no $BWTS_i$. Take such $u$ a node the closest to the root.
It means by definition that its parent $v$ is in some $BWTS_i$.
Hence by the above, $v$ is visited infinitely often, and with the same reasoning as above, $u$ will be eventually added to the tree, a contradiction. Hence there exists some $I$ so that $BWTS_i$ is full for all $i>I$. In this case, the definition of the backpropagation for $V$ being  the same as for the cost $C$, the strategy for $BWTS_i$ is the same as for the complete BWTS.
\end{proof}

\noindent
{\bf Evaluation of the complexity:}
Compared to the full BWTS tree, the 
smallest number of iterations (resp. nodes) is 
$I_0 = |A|\times |\mathcal{W}_0|$ (resp. min\_nodes $= 3 I_0$), 1024 for our example , to reach the leaves (and thus evaluate the cost) on all the (uncontrollable) chance branches. Branches from decision nodes are only sparsely explored.
The algorithm needs $O(I \times k \times |V| \times |\mathcal{W}_0|)$ operations to compute $\pi_I$.

Compared with the full BWTS, we do not need to explore all decision-nodes, but only a few.
The skeleton with only chance nodes is a 
 binary balanced tree of depth $\log{|\mathcal{W}_0|}$. With only one decision explored, it gives 
$2^{\log{|\mathcal{W}_0|}}=|\mathcal{W}_0|$ nodes, times $|A|$ possible nodes for their children. 
Concerning the complexity: rollouts take 
$O(k |V| |\mathcal{W}_0|)$ operations for each $i$: 
for each of $k$ simple strategies, each world in $\mathcal{W}_0$, each state in $V$ is accessed $O(1)$ times.
The backpropagation needs $O(\log(|\mathcal{W}_0|))$ operations, which is negligible.

\section{Description of all the stages of BWTS}

\noindent
\textbf{Stages of BWTS Construction:}
Every iteration of the BWTS algorithm starts at the root, which is a decision node $(v_0,\mathcal{W}_0)$.
We adopt 
in Algorithm~\ref{alg:BWTS} the standard MCTS stages (selection, expansion, rollout, and backpropagation) with modifications to 3.~Rollouts (see above), and to 4.~Backpropagation detailed below.


\begin{algorithm}[h]
\caption{Belief World Tree search}
\label{alg:BWTS}
\begin{algorithmic}[1]
\Require root decision node $(v_0, \mathcal{W}_0)$, total number of iterations $I$, exploration constant $c_{e}$, rollout policies $(\sigma_{i})_{1 \leq k}$.
\State $root \gets (v_0, \mathcal{W}_0)$
\For{$i \in \{0, \dots , I\}$} \Comment{iterate $I$ times}

  \Statex \textbf{\#1. Selection}
  \State $u \gets root$
  \While{$u$ is fully expanded}
    \If{$u$ is a decision node}
      
      \State \quad $a \gets \alpha(u)$; 
      $u \gets \text{Child}(u,a)$ \Comment{Eq. (5)}
    \Else  
       \State \, $ u \gets \omega(u)$ \Comment{Eq. (6)}
    \EndIf
  \EndWhile
  \State $(v,\mathcal{W}) \gets u$
  
  \Statex \textbf{\#2. Expansion}
  \State randomly sample a child $(v,a,\mathcal{W})$ that does not exist in the tree and attach it to the tree
  
\State attach $(v,\mathcal{W}^-)$ and $(v',\mathcal{W}^+)$ to the tree

  \Statex \textbf{\#3. Rollouts}  
  \State Compute $Q(v',\mathcal{W}^+)$ 
  
  \State \quad \, \quad and $Q(v,\mathcal{W}^-)$ 
  \Comment{Eq. (4)}
  
  \Statex \textbf{\#4. Backpropagation}
  \State $v \gets (v,\mathcal{W})$
  \While{$v \neq root$}
    \State update $Q(v)$ \Comment{Eq.~(7) and (8)} 
    
    \quad \, \quad \, \quad \State $v \gets parent(v)$
  \EndWhile  
\EndFor
\State \Return $root$
\end{algorithmic}
\end{algorithm}

The four stages are repeated for iterations $i \in \{0,1,\dots, I\}$, where the hyperparameter $I$ denotes the total number of iterations (compute budget) to construct BWTS. We say that a decision node $(v,\mathcal{W})$ is \emph{incomplete} if 
there is an action $a \in A$ without chance node $(v,a,\mathcal{W})$.
At each iteration, one incomplete decision node $(v,\mathcal{W})$ is {\em selected}, 
one chance node $(v,a,\mathcal{W})$ is added as well as its two (decision nodes) children
$(v',\mathcal{W}^+)$ and $(v,\mathcal{W}^-)$ ({\em expansion} stage).

\smallskip
\noindent
\textbf{\emph{(1) Selection.}}
Starting from the root $(v_0, \mathcal{W}_0)$, we descend the tree using the rules $\alpha,\omega$ defined in Eq. (5),(6),
until we reach an incomplete
decision node $(v,\mathcal{W})$ (or a depth/horizon cut-off). We increment the number of visits counter $N$ for each node visited during the descent.

The score at a decision node $(v_t,\mathcal{W}_t)$ depends on the visit count of the node, $N(v_t,\mathcal{W}_t)$, and the visit count of each action $a \in A$, $N(v_t,a,\mathcal{W}_t)$.
An exploration hyperparameter $c_e$ controls the exploration–exploitation trade-off: larger $c_e$ encourages more exploration (potentially better solutions) at the cost of additional compute.
Eq.~(5) defines the tree policy $\pi_{tree}$ at decision nodes.
\begin{equation*}
a^\star \;\in\; \arg\min_{a\in A}
\Big(Q(v_t,a,\mathcal{W}_t)\;-\;c_{e}\sqrt{\tfrac{\ln N(v_t,\mathcal{W}_t)}{N(v_t,a,\mathcal{W}_t)}} \Big).
\end{equation*}
At chance nodes, the tree policy \(\pi_{\mathrm{tree}}\) is probabilistic.
Let the two child decision nodes have belief subsets \(\mathcal{W}^{+}\) and \(\mathcal{W}^{-}\) (with parent set \(\mathcal{W}\)).
We select the child \((v, \mathcal{W}^{+})\) with probability:
\begin{equation*}
\begin{aligned}
\pi_{\mathrm{tree}}\!\big((v, \mathcal{W}^{+}) \mid (v,a; \mathcal{W})\big) &= \frac{|\mathcal{W}^{+}|}{|\mathcal{W}|},\\
\pi_{\mathrm{tree}}\!\big((v,\mathcal{W}^{-}) \mid (v,a;\mathcal{W})\big) &= \frac{|\mathcal{W}^{-}|}{|\mathcal{W}|}.
\end{aligned}
\end{equation*}

This cardinality-based sampling biases visits toward children with larger belief set $\mathcal{W}$, which typically require more iterations to expand due to a larger number of potential leaf nodes.

\smallskip
\noindent
\textbf{\emph{(2) Expansion.}}
Upon reaching incomplete node $(v,\mathcal{W})$, we pick at random one action 
$a\in A$ such that $(v,a,\mathcal{W})$ does not yet exist in the tree.
We add it to the tree, as well as its two children 
$(v',\mathcal{W}^+)$ and $(v,\mathcal{W}^-)$.
When adding a new node, we ensure that the same configuration \((v, \mathcal{W})\) does not already appear along the ancestor path (cycle avoidance). If this is the case, then this action is known not to be optimal, and the associated cost is fixed to $+\infty$.
Both chance nodes $(v',\mathcal{W}^+)$ and $(v,\mathcal{W}^-)$ are evaluated using {\em rollouts}.

\smallskip
\noindent
\textbf{\emph{(3) Rollout.}}
We compute the evaluation $Q(v',\mathcal{W}^+)$ and $Q(v,\mathcal{W}^-)$ of the cost of 
$(v',\mathcal{W}^+)$ and $(v,\mathcal{W}^-)$, based on Eq.~(4).

Designing effective rollouts in BWTS is crucial as rollout returns directly affect the current value estimates and, consequently, future selections under the tree policy $\pi_{tree}$.
Naive (random) rollouts, as commonly used in MCTS, can be uninformative in some domains (such as mazes) where loops and dead-ends induce high variance and may bias exploration toward undesirable, suboptimal policies.
To reduce variance and stabilize backups, we propose \emph{strategic rollouts} instead of purely random ones.

At each AND node, we evaluate a fixed set of ten rollout strategies $\{\sigma_{k}\}_{k=1}^{10}$.
For a given strategy $\sigma_k$ and each belief set $W_i \in \mathcal{W}_t$ present at the current node, we execute the same strategy policy and obtain a return cost $c^{k}(W_i) = C((v_t, \dots, g); \sigma_k, W_i)$.
We then compute the probability-weighted average
$$c^k_{avg} =\sum_i p(W_i)\, c^{k}(W_i),$$
and use the minimum across strategies, $c_{roll} = \min_{k \in \{1,\dots,10\}} c^{k}_{avg}$, as the rollout estimate for that node.

We now define each rollout strategy $\sigma_k$.
Each strategy has two components:
(i) a primary target has one of five choices: move toward the midpoint of each border (four choices) or move toward the goal; and
(ii) a sweep rule when encountering a blocking wall: perform a directed sweep along the wall to find an opening, with two variants, left-priority vs.\ right-priority (for vertical walls, these correspond to up-first vs.\ down-first; for horizontal walls, left-first vs.\ right-first).
Combining the $5$ primary targets with the $2$ sweep variants yields $5 \times 2 = 10$ rollout strategies in total. These strategic rollouts lower variance, stabilize AND/OR backups, and reduce the number of iterations required to grow a useful search tree.



\smallskip
\noindent
\textbf{\emph{(4) Backpropagation.}}
After computing the evaluations $Q(v',\mathcal{W}^+)$ and $Q(v,\mathcal{W}^-)$, 
we backtrack inductively from the bottom to the root and update the values of ancestor nodes.
For chance nodes $(v, a, \mathcal{W})$ with children 
\((v, \mathcal{W}^-), (v', \mathcal{W}^+)\), 
we set as in Eq.~(2):
\begin{equation*}
 \label{eq:chance_value}
Q(v,a,\mathcal{W})= \frac{|\mathcal{W}_1| Q(v_1,\mathcal{W}_1) + |\mathcal{W}_2|Q(v_2,\mathcal{W}_2)}{|\mathcal{W}_1|+|\mathcal{W}_2| = |\mathcal{W}|}    
\end{equation*}

For decision nodes \((v, \mathcal{W})\),
we set as in Eq.~(3):
\begin{equation*}
\label{eq:decision_value}
Q(v,\mathcal{W}) \;\leftarrow\; \min_{a \in A} (C(v, a) \;+\; Q(v,a,\mathcal{W}))
\end{equation*}


These four stages are repeated for a given iteration budget.

\section{BAMCP Discussion}

BAMCP is a symbolic planner used at the high level in neurosymbolic HRL frameworks and also allows the incorporation of prior knowledge. It provides an approximate solution to the Bayesian Adaptive Markov Decision Process (BAMDP), which formulates the MDP objective over all possible transition probability distributions. Solving the BAMDP exactly is intractable, as it requires integrating over all such distributions.
BAMCP addresses this by assuming independence across state–action pairs: for each pair ($s,a$), it models a distribution over next states ($s'$) independently of other pairs. This is typically implemented using a Dirichlet prior over transition probabilities, where the posterior is updated only from observed tuples ($s,a,s'$). 
Consequently, BAMCP captures prior knowledge solely at the level of one-step transition dynamics. However, in the Belief World Problem, knowledge is structural, hence it creates dependencies across positions. For instance, we know there is a (horizontal or vertical) wall, but we do not know where. This creates dependencies e.g. between cells $(3,3)$ and $(7,7)$ that cannot have a wall at the same time. Such dependencies cannot be represented under independent transition model of BAMCP. The core novelty of BWTS is to explicitly model and reason over structural knowledge, and handle dependencies between positions.

Still, we attempted to incorporate structural knowledge into BAMCP and conduct various experiments, reported in Table~\ref{tab:bamcp}. A naive prior is to assume that, for every unexplored cell, there is no wall with probability one (so the agent moves with $p_{\text{move}}=1$) and a wall with probability zero (so $p_{\text{stay}}=0$). Once a wall is detected in position $X$, these probabilities are reversed for position $X$, i.e., $p_{\text{move}}=0$ and $p_{\text{stay}}=1$. Indeed, this is exactly the assumption made by $D^*$, which plans under the belief that all cells are initially free and replans in the next cycle using the updated knowledge after wall detection. We also experimented with a prior $p_{\text{move}}=0.9$ and $p_{\text{stay}}=0.1$ to reflect the proportion of walls from $\mathcal{W}_{h},\mathcal{W}_{v},\mathcal{W}_{hv}$.

BAMCP performs poorly under both priors, incurring roughly twice the cost of $D^*$ (Table 1). The reason
is 
tied to the use of stochastic rollouts, which guide the agent’s exploration. 
With prior 
($p_{\text{move}}=0.9$, $p_{\text{stay}}=0.1$)
or
($p_{\text{move}}=1$, $p_{\text{stay}}=0$),
each rollout generates a world where $10\%$ of the cells is a wall on average. This allows a lot of possible trajectories for the rollouts, including a lot of looping, as illustrated on Fig. \ref{fig:bamcp_rollout}, and very few rollouts reach the goal (both rollout 1 and 2 fail to reach the goal after 120 steps). This is a key reason why we use strategic rollouts in BWTS.

\begin{table}[t]
  \centering
  \caption{Expected-cost (Total run time for solving all grid configuration), lower is better, for the BWTS-learned policy, BAMCP ($p_{move}/p_{stay}$), and $D^*$ across ($\mathcal{W}_h$), ($\mathcal{W}_v$), and  ($\mathcal{W}_{hv}$).}
  \label{tab:bamcp}
  \resizebox{0.9\columnwidth}{!}{
  \begin{tabular}{@{}lccc@{}}
    \toprule
    Belief Worlds & $\mathcal{W}_h$ & $\mathcal{W}_v$ & $\mathcal{W}_{hv}$ \\
    \midrule
    
    \multirow{2}{*}{$D^*$}
      & $24.56$ & $23.19$ & $23.88$ \\
      & $\mathbf{(0.03s)}$ & $\mathbf{(0.03s)}$ & $\mathbf{(0.06s)}$ \\


    \multirow{2}{*}{BWTS}
      & $\mathbf{21.22 \pm 0.20}$ & $\mathbf{21.22 \pm 0.16}$ & $\mathbf{23.36 \pm 0.08}$ \\
      & $(84.51s \pm 12.24)$             & $(84.28s \pm 9.58)$             & $(834.24s \pm 196.96)$ \\

    \midrule
      \multirow{2}{*}{BAMCP (1.0/0.0)}
      & $49.61 \pm 1.49$ & $54.76 \pm 5.58$ & $50.81 \pm 1.59$ \\
      & $(4313.19s \pm 116.40)$    & $(4814.18s \pm 378.05)$            & $(8733.22s \pm 230.11)$ \\

     \cmidrule(lr){2-4}

      \multirow{2}{*}{BAMCP (0.9/0.1)}
      & $45.85 \pm 2.69$ & $50.55 \pm 2.10$ & $50.70 \pm 2.43$ \\
      & $(4215.38s \pm 348.65)$    & $(4667.55s \pm 225.59)$            & $(9089.90s \pm 522.40)$ \\

     \cmidrule(lr){2-4}

    \multirow{2}{*}{BAMCP (0.6/0.4)}
      & $\mathbf{28.42 \pm 1.16}$ & $\mathbf{28.89 \pm 1.29}$ & $\mathbf{40.42 \pm 1.39}$ \\
      & $(2469.89s \pm 132.08)$    & $(2312.52s \pm 83.85)$            & $(7367.34s \pm 351.41)$ \\

    \bottomrule
  \end{tabular}
  }
\end{table}

\begin{figure}[t] 
  \centering
  
   \includegraphics[width=0.3\linewidth]{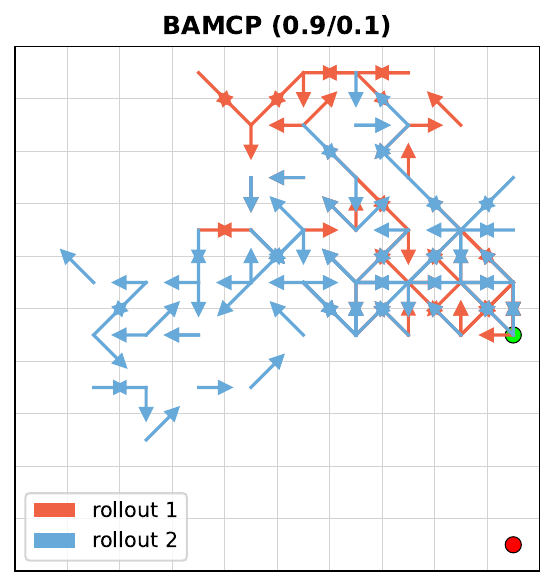}
  \caption{Samples of random rollout trajectories of length around $120$, starting from the \textcolor{green}{green dot} at $(5,9)$ under BAMCP $(0.9/0.1)$, fail to reach the goal \textcolor{red}{red dot} at $(9,9)$.
}
  \label{fig:bamcp_rollout}
\end{figure}

\begin{figure}[b] 
  \centering
  \begin{subfigure}{0.18\linewidth}
    \centering
    \includegraphics[width=\linewidth]{figs/two_rooms/bamcp_multi_27_52.pdf}
    \caption{Trajectory}
    \label{fig:traj_bamcp0.6}
  \end{subfigure}
  \begin{subfigure}{0.5\linewidth}
    \centering
    \includegraphics[width=0.82\linewidth]{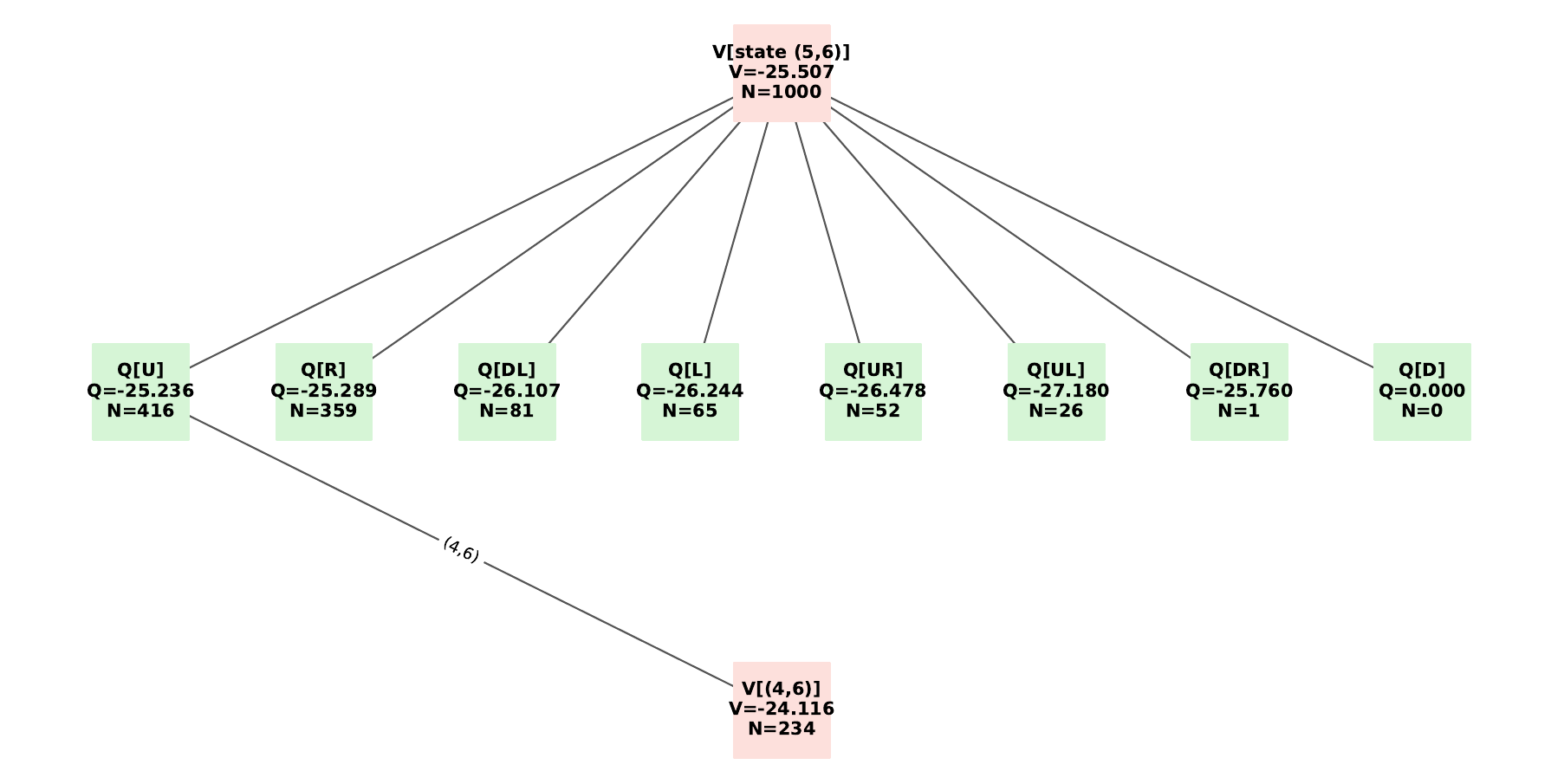}
    \caption{BAMCP root node at location (5,6)}
    \label{fig:bamcp_tree}
  \end{subfigure}
  \caption{The agent oscillates between moving up at $(5,6)$ and moving down again for BAMCP (0.6/0.4) due to difficulties due to stochastic setting of $p_{move}=0.6$ and $p_{stay}=0.4$.}
  \label{fig:bamcp_loop}
\end{figure}

The best-performing prior we found was setting $p_{\text{move}}=0.6$ and $p_{\text{stay}}=0.4$, which is closer to $D^*$, $20\%$ less sample efficient than $D^*$. With $40\%$ of walls, the space is more constrained, there are fewer loops, and more stochastic rollouts find the goal.
The execution time is substantially higher than $D^*$, and even higher than BWTS, which computes the optimal policy, whereas BAMCP is even less efficient than $D^*$: for structural knowledge, BAMCP is just not adequate.
We finally illustrate that
 the strategy found by BAMCP is also random and does not follow any pattern like in $D^*$ and BWTS (see main paper). As shown in Fig. \ref{fig:bamcp_loop}, the agent can oscillate between two cells incurring unnecessary cost, which is avoided by the 
 more strategic methods $D^*$ and BWTS.

\section{Experiment Details and Additional Results}

The maps for the Point Maze environments used in our experiments are illustrated in Fig. \ref{fig:point_maze_maps}. Each environment consists of a continuous two-dimensional navigation domain, requiring the agent to reach a goal from a start point. The low-level policy is trained following the same procedure as in RGL (using SAC), where a continuous control policy is optimized to navigate between locally specified subgoals in an obstacle-free environment. The low-level training uses a curriculum reward shaping in which the goal distance for each episode is randomized and gradually increased over training episodes. The resulting training reward curve is shown in Fig. \ref{fig:reward_curve}, demonstrating stable convergence of the low-level controller.


\begin{figure}[h] 
  \centering
  \begin{subfigure}{0.25\linewidth}
    \centering
    \includegraphics[width=\linewidth]{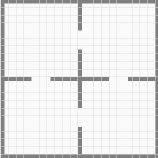}
    \caption{Four Rooms}
    \label{fig:four_rooms_map}
  \end{subfigure}
  \begin{subfigure}{0.25\linewidth}
    \centering
    \includegraphics[width=\linewidth]{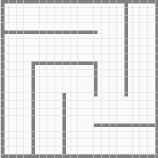}
    \caption{Medium Maze}
    \label{fig:medium_maze_map}
  \end{subfigure}
  \begin{subfigure}{0.25\linewidth}
    \centering
    \includegraphics[width=1.0\linewidth]{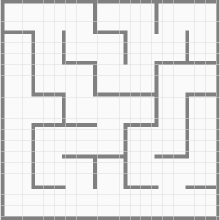}
    \caption{Hard Maze}
    \label{fig:hard_maze_map}
  \end{subfigure}
  \caption{Point Maze Environment Maps}
  \label{fig:point_maze_maps}
\end{figure}


\begin{figure}[h]
  \centering
  \includegraphics[width=0.3\linewidth]{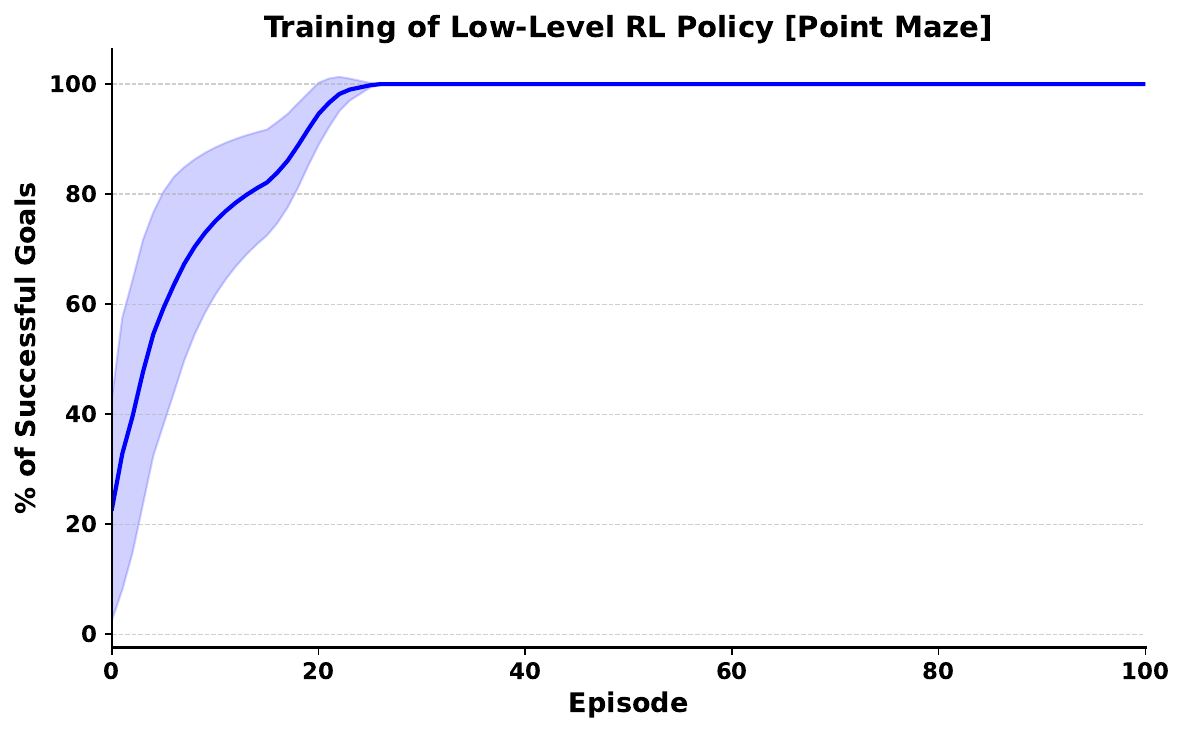}
  \includegraphics[width=0.3\linewidth]{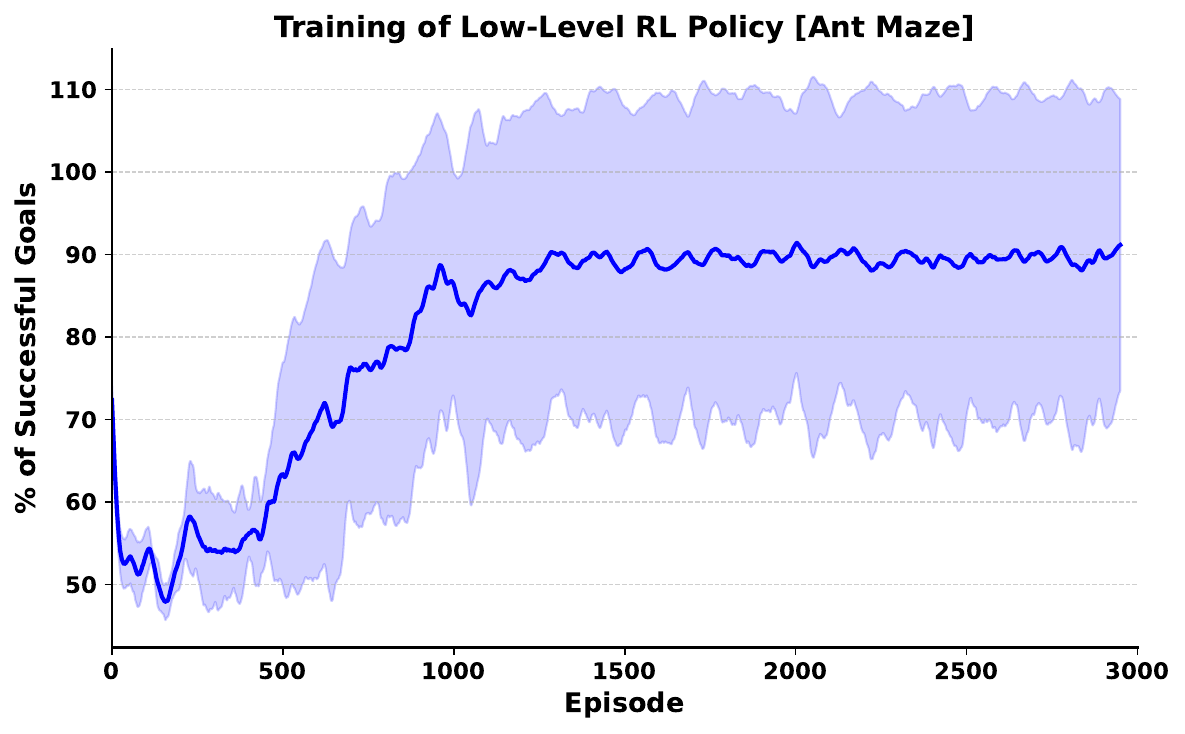}
  \caption{Low-level policy training curve of point maze agent. Percentage of successful goals in each episode. The environment is sparse reward and gets 1 after reaching the goal, 0 otherwise.}
  \label{fig:reward_curve}
\end{figure}

\smallskip
\noindent
\textbf{InK in PointMaze FourRooms}:
We illustrate this on Fig.~\ref{fig:point_agent}
with one average case on the 4 room environment. 

\begin{figure}[t!] 
  \centering
  \begin{subfigure}{0.2\linewidth}
    \centering
    \includegraphics[width=\linewidth]{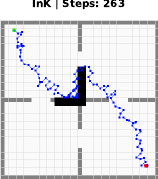}
    \caption{}
    \label{fig:pm_bwts_fourrooms}
  \end{subfigure}
  \begin{subfigure}{0.2\linewidth}
    \centering
    \includegraphics[width=1.0\linewidth]{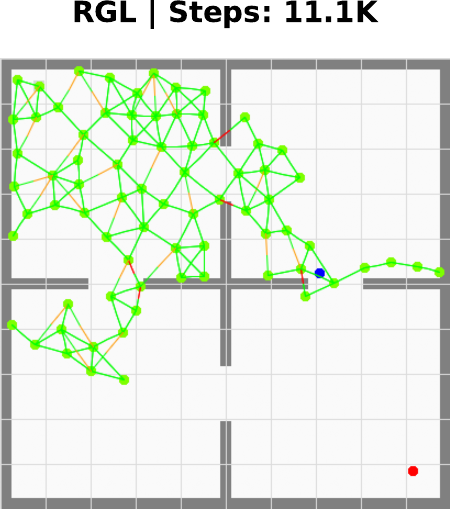}
    \caption{}
    \label{fig:pm_rgl_fourrooms}
  \end{subfigure}
  \caption{Runs of InK ($\emptyset$) vs RGL training in 4 rooms.}
  \label{fig:point_agent}
\end{figure}

\smallskip
\noindent
\textbf{InK in Ant-Maze U-Room}: We demonstrate in Fig.~\ref{fig:ant_maze} that InK can handle even more challenging environments, such as the Ant-Maze U-Room. Here, the difference lies only in the low-level policy, while high-level planning is performed using InK symbolic planners.

\begin{figure}[h!] 
  \centering
  \begin{subfigure}{0.25\linewidth}
    \centering
    \includegraphics[width=\linewidth,trim={0 0 0 4mm},clip]{figs/ant_maze/Ant_Dstar_urooms_0.pdf}
    \caption{InK-$D^*$ (1507 Steps)}
    \label{fig:ant_maze_dstar}
  \end{subfigure}
  \quad \; 
  \begin{subfigure}{0.3\linewidth}
    \centering
    \includegraphics[width=0.82\linewidth,trim={0 0 0 4mm},clip]{figs/ant_maze/Ant_bwts_urooms_0.pdf}
    \caption{InK-BWTS (1134 Steps)}
    \label{fig:ant_maze_bwts}
  \end{subfigure}
  \caption{Ant-Maze U-Room RL Environment.}
  \label{fig:ant_maze}
\end{figure}


\section{InK-BWTS for RL Environments}

We provide here more explanation for line 569-571,
about when BWTS is used without prior knowledge as the symbolic planner. Then, the set of all mazes is too large to be used as belief set $\mathcal{W}_0$.
So we use a subset of $\mathcal{W}_{hv}$ as a representative set of the next wall to be encountered.

InK alternates between high-level planning and low-level execution while incrementally updating knowledge about the environment which is shown in Algorithm 2. 

When BWTS is used as the symbolic planner, 
planning is restricted to a relevant subregion determined by a provisional path computed under current knowledge (line 3 and 4), different at each iteration. Within this region, a belief set is generated using simple wall primitives (horizontal and vertical), reflecting the intuition that complex maze layouts are composed of elementary horizontal or vertical walls with openings (line 5). High-level symbolic planning is then performed using BWTS (Algorithm 1) to select an intermediate subgoal (line 6), which is executed by a low-level controller with monitoring $mon$ (line 8). Newly discovered obstacles update the knowledge map $M$, and the process repeats until the goal is reached.

\begin{algorithm}[]
\caption{InK-BWTS with a representative set $\mathcal{W}_{bb}$}
\label{alg:ink_bwts}
\begin{algorithmic}[1]
\Require Current abstract knowledge map $M$, start state $s_0$, goal state $g$, low-level policy $\pi_\ell$
\Ensure Agent reaches goal $g$

\State Initialize current state $s \leftarrow s_0$
\While{$s \neq g$}
    \State Run A* on current knowledge $M$ from $s$ to $g$ to obtain provisional path $P$
    \State Compute the smallest axis-aligned bounding box $B$ enclosing $P$
    \State $\mathcal{W}_{bb} \leftarrow \textsc{GenerateBeliefSet}(M, B)$
    \State Run BWTS$(\mathcal{W}_{bb}, s, g)$ to construct BWTS tree 
    \State Extract next high-level subgoal $\hat{s}$ from the root of BWTS tree
    \State Execute low-level policy $\pi_l$ from $s$ toward $\hat{s}$ with collision monitoring ($mon$)
    \If{unexpected wall encountered during execution}
        \State Update knowledge map $M$ with detected obstacle
    \Else
        \State Update current state $s \leftarrow \hat{s}$
    \EndIf
\EndWhile
\State \Return success
\end{algorithmic}
\end{algorithm}

\begin{figure}[] 
  \centering
  \begin{subfigure}{0.2\linewidth}
    \centering
    \includegraphics[width=0.7\linewidth]{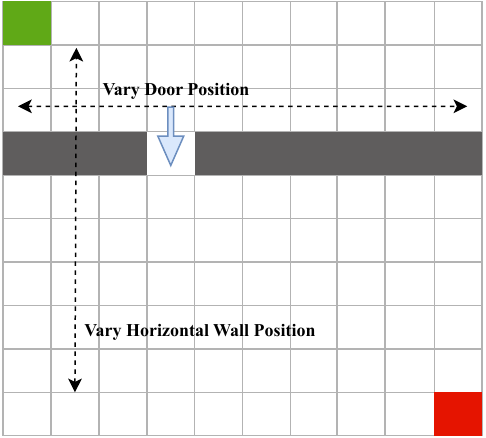}
    \caption{Horizontal Walls}
    \label{fig:bw_hor}
  \end{subfigure} 
  \begin{subfigure}{0.2\linewidth}
    \centering
    \includegraphics[width=0.7\linewidth]{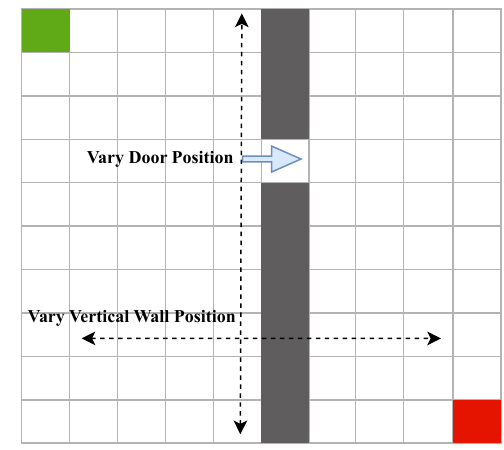}
    \caption{Vertical Walls}
    \label{fig:bw_ver}
  \end{subfigure}
  \caption{Belief Worlds Generation for Maze Environment.}
  \label{fig:bw_generation}
\end{figure}


\subsection{Belief Set Generation for InK-BWTS}

The belief set construction is designed to encode a structural prior over maze layouts while remaining computationally tractable which is provided in Algorithm 3.
The set of belief worlds is composed of environments containing a \emph{single horizontal} or a \emph{single vertical} wall, each with exactly one opening. This design reflects the intuition that complex maze structures can be decomposed into compositions of such primitive wall elements. As illustrated in Fig.~\ref{fig:bw_generation}, for a $10\times10$ grid we generate horizontal-wall belief worlds by placing a horizontal wall at each admissible row and inserting a single opening. The start row ($y=0$) and goal row ($y=9$) are excluded from wall placement to ensure feasibility of trivial paths. This yields $8$ admissible wall rows; for each row, the opening may be placed in any of the $10$ columns, giving rise to $|\mathcal{W}_h| = 80$ horizontal-wall belief worlds (Fig.~\ref{fig:bw_hor}).

\begin{algorithm}[]
\caption{GenerateBeliefSet $\mathcal{W}_{bb}$}
\label{alg:generate_belief}
\begin{algorithmic}[1]
\Require Current knowledge map $M$, bounding box $B$
\Ensure Belief-world set $\mathcal{W}_{bb}$

\State Initialize $\mathcal{W}_{bb} \leftarrow \emptyset$
\State Restrict knowledge map $M$ to region inside $B$ to obtain $M_B$
\State Add $M_B$ to $\mathcal{W}_{bb}$

\ForAll{wall orientation $o \in \{\text{horizontal}, \text{vertical}\}$}
    \ForAll{wall offsets $d \in \{2, \dots,8\}$ within $B$} \Comment{for reduced set offset is $\{1,2,5,9\}$}
        \State Place a wall of orientation $o$ at offset $d$
        \ForAll{valid opening positions along the wall}
            \State Construct belief world $W$ by inserting the wall with a single opening into $M_B$
            \State Add $W$ to $\mathcal{W}_{bb}$
        \EndFor
    \EndFor
\EndFor

\State \Return $\mathcal{W}_{bb}$
\end{algorithmic}
\end{algorithm}

Vertical-wall belief sets are constructed analogously by placing a vertical wall at each admissible column except the start column ($x=0$) and goal column ($x=9$), again with a single opening per wall. This produces $|\mathcal{W}_v| = 80$ vertical-wall belief worlds (Fig.~\ref{fig:bw_ver}). We further define the combined belief set as $\mathcal{W}_{hv} = \mathcal{W}_h \cup \mathcal{W}_v$.

In the full InK-BWTS pipeline, belief sets are not generated over the entire grid. Instead, they are constructed only within a dynamically computed bounding box ($B$) that encloses the provisional $A^*$ path. That is, the planner crops the relevant section of the map for belief-world construction rather than operating over the full grid. This design is particularly beneficial for larger grid sizes, as it significantly accelerates BWTS construction.

Additionally, we may not want to place walls at every possible horizontal or vertical position. Instead, walls are generated only at selected positions to construct the belief set, as done in line~5 of Algorithm~\ref{alg:generate_belief}, where the wall location is chosen by selecting an offset from the agent toward the goal. 
To further restrict the belief space, we reduce the number of possible wall openings for walls that are far from the agent. For example, in a $10\times10$ grid with a single wall at offset 9, instead of considering all 10 possible configurations corresponding to different opening locations, we consider only alternate positions, reducing the number of openings to 5.

\end{document}